\def\eqref#1{equation~\ref{#1}}
\def\1{\bm{1}}
\def\vr{{\bm{r}}}
\def\vs{{\bm{s}}}
\def\vt{{\bm{t}}}
\def\vv{{\bm{v}}}
\def\vx{{\bm{x}}}
\def\vy{{\bm{y}}}
\DeclareMathAlphabet{\mathsfit}{\encodingdefault}{\sfdefault}{m}{sl}
\SetMathAlphabet{\mathsfit}{bold}{\encodingdefault}{\sfdefault}{bx}{n}
\newcommand{\E}{\mathbb{E}}
\def\hlinewd#1{%
\noalign{\ifnum0=`}\fi\hrule \@height #1 %
\futurelet\reserved@a\@xhline}
\newcolumntype{L}[1]{>{\raggedright\let\newline\\\arraybackslash\hspace{0pt}}m{#1}}
\newcolumntype{C}[1]{>{\centering\let\newline\\\arraybackslash\hspace{0pt}}m{#1}}
\newcolumntype{R}[1]{>{\raggedleft\let\newline\\\arraybackslash\hspace{0pt}}m{#1}}
\title{Biomedical Named Entity Recognition via Dictionary-based\\ Synonym Generalization}
\author{Zihao Fu$^{\spadesuit}$\ \ \ \  \textbf{Yixuan Su}$^{\spadesuit,\heartsuit}$\ \ \ \   \textbf{Zaiqiao Meng}$^{\spadesuit \diamondsuit}$\ \ \ \  \textbf{Nigel Collier}$^\spadesuit$ \\
$^\spadesuit$Language Technology Lab, University of Cambridge \\
$^\heartsuit$ Cohere \quad
$^\diamondsuit$School of Computing Science, University of Glasgow \\
\texttt{$^\spadesuit$\{zf268, nhc30\}@cam.ac.uk}\\
\texttt{$^\heartsuit$yixuan@cohere.com}\quad
\texttt{$^\diamondsuit$zaiqiao.meng@glasgow.ac.uk}}
\begin{document}
\maketitle
\begin{abstract}
  Biomedical named entity recognition is one of the core tasks in biomedical natural language processing (BioNLP). To tackle this task, numerous supervised/distantly supervised approaches have been proposed. Despite their remarkable success, these approaches inescapably demand laborious human effort. To alleviate the need of human effort, dictionary-based approaches have been proposed to extract named entities simply based on a given dictionary. However, one downside of existing dictionary-based approaches is that they are challenged to identify concept synonyms that are not listed in the given dictionary, which we refer as the synonym generalization problem. 

  In this study, we propose a novel Synonym Generalization (SynGen) framework that recognizes the biomedical concepts contained in the input text using span-based predictions. In particular, SynGen introduces two regularization terms, namely, (1) a synonym distance regularizer; and (2) a noise perturbation regularizer, to minimize the synonym generalization error. To demonstrate the effectiveness of our approach, we provide a theoretical analysis of the bound of synonym generalization error. We extensively evaluate our approach on a wide range of benchmarks and the results verify that SynGen outperforms previous dictionary-based models by notable margins. Lastly, we provide a detailed analysis to further reveal the merits and inner-workings of our approach.\footnote{The data and source code of this paper can be obtained from \url{https://github.com/fuzihaofzh/BioNER-SynGen}}
\end{abstract}

\section{Introduction}
Biomedical Named Entity Recognition (BioNER) \cite{settles2004biomedical,habibi2017deep,song2021deep,sun2021biomedical} is one of the core tasks in biomedical natural language processing (BioNLP). It aims to identify phrases that refer to biomedical entities, thereby serving as the fundamental component for numerous downstream BioNLP tasks \cite{leaman2008banner,kocaman2021biomedical}.

Existing BioNER approaches can be generally classified into three categories, i.e. (1) supervised methods; (2) distantly supervised methods; and (3) dictionary-based methods. 
Supervised methods \cite{wang2019cross,lee2020biobert,weber2021hunflair} train the BioNER model based on large-scale human-annotated data. However, annotating large-scale BioNER data is expensive as it requires intensive domain-specific human labor. 
To alleviate this problem, distantly supervised methods \cite{fries2017swellshark,zhang2021improving,zhou2022distantly} create a weakly annotated training data based on an in-domain training corpus. Nonetheless, the creation of the weakly annotated data still demands a significant amount of human effort \cite{fries2017swellshark,wang2019distantly,shang2020learning}. For instance, the preparation of the in-domain training corpus could be challenging as the corpus is expected to contain the corresponding target entities. To this end, most existing methods \cite{wang2019distantly,shang2020learning} simply use the original training set without the annotation as the in-domain corpus, which greatly limits their applicability to more general domains. 
In contrast to the supervised/distantly-supervised methods, dictionary-based methods are able to train the model without human-annotated data. Most of the existing dictionary-based frameworks
\cite{aronson2001effective,song2015developing,soldaini2016quickumls,nayel2019integrating,basaldella2020cometa} identify phrases by matching the spans of the given sentence with entities of a dictionary, thereby avoiding the need of extra human involvement or in-domain corpus. 
As human/expert involvement in the biomedical domain is usually much more expensive than in the general domain, in this paper, we focus our study on the dictionary-based method for the task of BioNER.

Although dictionary-based approaches do not require human intervention or in-domain corpus, they suffer from the \textit{synonym generalization problem}, i.e. the dictionary only contains a limited number of synonyms of the biomedical concepts that appear in the text. Therefore, if an entity synonym in the text is not explicitly mentioned by the dictionary, it cannot be recognized. This problem severely undermines the recall of dictionary-based methods as, potentially, a huge amount of synonyms are not contained in the dictionary. 

To address the synonym generalization problem, we propose SynGen (\underline{\textbf{Syn}}onym \underline{\textbf{Gen}}eralization) --- a novel framework that generalizes the synonyms contained in the given dictionary to a broader domain. \Cref{fig:framework} presents an overview of our approach. (1) In the training stage, SynGen first samples the synonyms from a given dictionary as the positive samples. Meanwhile, the negative samples are obtained by sampling spans from a general biomedical corpus. Then, it fine-tunes a pre-trained model to classify the positive and negative samples. In particular, SynGen introduces two novel regularizers, namely a synonym distance regularizer which reduces the spatial distance; and a noise perturbation regularizer which reduces the gap of synonyms' predictions, to minimize the synonym generalization error. These regularizers make the dictionary concepts generalizable to the entire domain. (2) During the inference stage, the input text is split into several spans and sent into the fine-tuned model to predict which spans are biomedical named entities. To demonstrate the effectiveness of the proposed approach, we provide a theoretical analysis to show that both of our proposed regularizers lead to the reduction of the synonym generalization error. 

We extensively test our approach on five well-established benchmarks and illustrate that SynGen brings notable performance improvements over previous dictionary-based models on most evaluation metrics. Our results highlight the benefit of both of our proposed regularization methods through detailed ablation studies. Furthermore, we validate the effectiveness of SynGen under the few-shot setup, notably, with about 20\% of the data, it achieves performances that are comparable to the results obtained with a full dictionary.


In summary, our contributions are:
\begin{itemize}
    \item We propose SynGen --- a novel dictionary-based method to solve the BioNER task via synonym generalization.
    \item We provide a theoretical analysis showing that the optimization of SynGen is equivalent to minimizing the synonym generalization error.
    \item We conduct extensive experiments and analyses to demonstrate the effectiveness of our proposed approach.
\end{itemize}



\begin{figure}[t]
  \centering
  \includegraphics[width=1.\columnwidth]{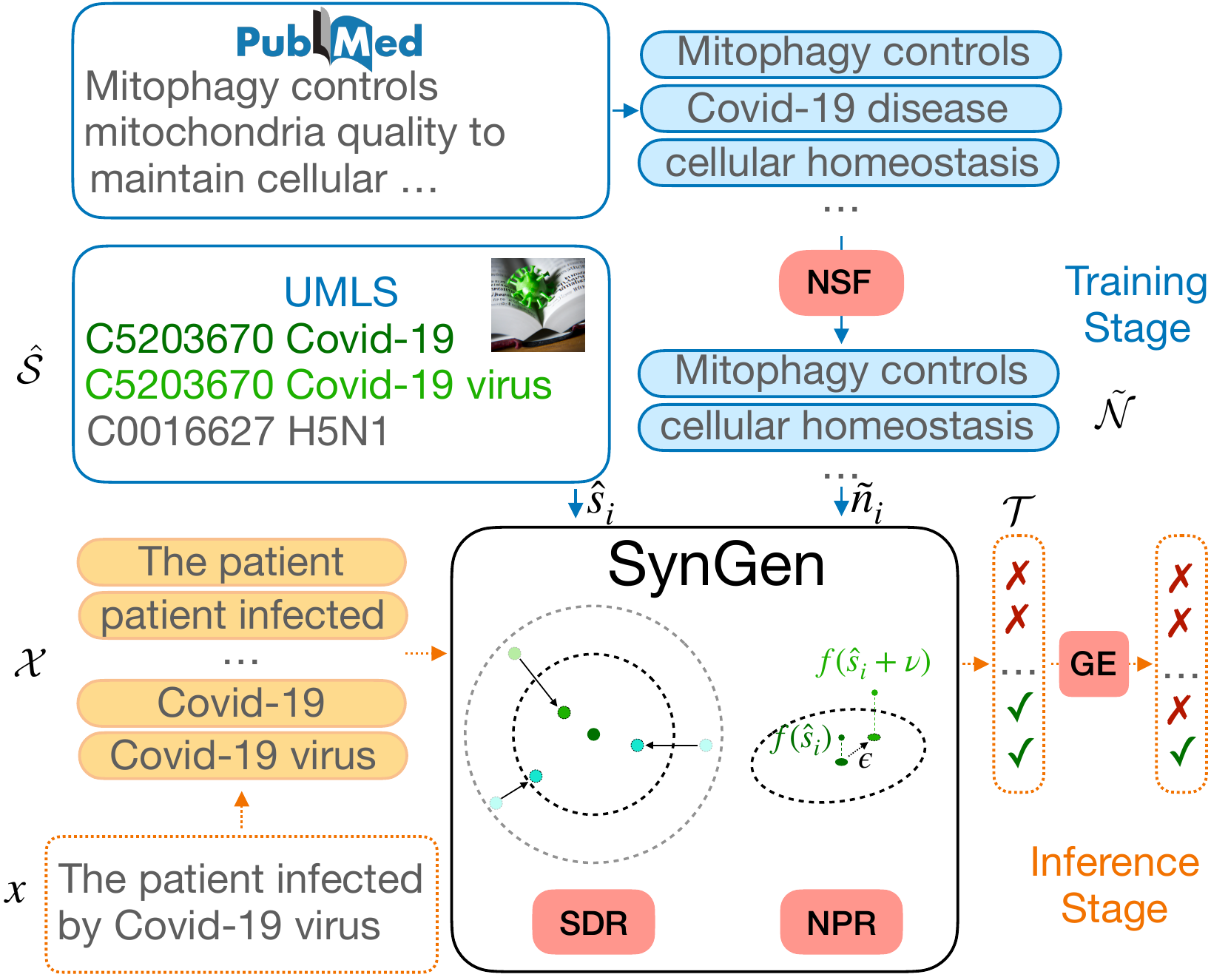}
  \caption{SynGen framework. \textcolor{cyan}{$\rightarrow$} represents the training steps while \textcolor{orange}{$\dashrightarrow$} represents the inference steps.}
  \label{fig:framework} 
\end{figure}

\section{Methodology}
In this section, we first give the definition of the dictionary-based biomedical NER task. Then, we introduce our Synonym Generalization (i.e. SynGen) framework, following by the details of the synonym distance regularizer and the noise perturbation regularizer. Lastly, we provide a theoretical analysis on the problem of synonym generalization 
to show the effectiveness of our proposed approach.

\subsection{Task Definition}
Given a biomedical domain $\mathcal{D}$ (e.g. disease domain or chemical domain), we denote the set of all possible biomedical entities in $\mathcal{D}$ as $\mathcal{S}=\{\boldsymbol{s}_1, ..., \boldsymbol{s}_{|\mathcal{S}|}\}$, where $\boldsymbol{s}_i$ denotes the $i$-th entity and $|\mathcal{S}|$ denotes the size of $\mathcal{S}$. Then, given an input text $\boldsymbol{x}$, the task of biomedical NER is to identify sub-spans inside $\boldsymbol{x}$ that belong to $\mathcal{S}$, namely finding $\{{\vx}_{[b_1:e_1]},\cdots,{\vx}_{[b_k:e_k]}|\forall i\in [1, k], \boldsymbol{x}_{[b_i:e_i]}\in \mathcal{S}\}$, where $k$ is the number of spans, and $ b_i,e_i\in [1,|\boldsymbol{x}|]$ are the beginning and the ending token indices in $\boldsymbol{x}$ for the $i$-th span, respectively. 


However, in real-life scenarios, it is impractical to enumerate all possible entities in $\mathcal{S}$. Normally, we only have access to a dictionary $\hat{\mathcal{S}} \subset \mathcal{S}$,  $\hat{\mathcal{S}}=\{\hat{{\vs}}_1, ..., \hat{\boldsymbol{s}}_{|\hat{\mathcal{S}}|}\}$ which contains a subset of entities that belong to $\mathcal{S}$.\footnote{There are many biomedical dictionaries that are publicly available, such as UMLS \cite{bodenreider2004unified}, Snomed CT (\url{https://www.snomed.org}), MeSH (\url{https://meshb.nlm.nih.gov}), etc.} Thereby, the goal of dictionary-based biomedical NER is then to maximally recognize the biomedical entities from the input text conditioned on the available dictionary.



\subsection{Synonym Generalization Framework}
\Cref{fig:framework} depicts an overview of the proposed SynGen framework. (1) In the training stage (\cref{subsec:training}), it samples synonyms from a given dictionary (e.g. UMLS) as positive samples. Meanwhile, the negative samples are obtained by sampling spans from a general biomedical corpus (e.g. PubMed). Then, SynGen learns to classify positive and negative samples through the cross-entropy objective. Moreover, we propose two regularization methods (i.e. synonym distance regularization and noise perturbation regularization), which have been proved to be able to mitigate the synonym generalization error (detailed in \cref{sec:synonym_generalization_error}). (2) In the inference stage (\cref{subsec:training}), SynGen splits the input text into different spans and scores them separately following a greedy extraction strategy.


\subsubsection{Training}\label{subsec:training}  
During training, we first sample a biomedical entity $\hat{\boldsymbol{s}}_{i}$ (i.e. the positive sample) from the dictionary $\hat{\mathcal{S}}$ and encode its representation as $\hat{\boldsymbol{r}}_i=E(\hat{\boldsymbol{s}}_i)$, 
where $E(\cdot)$ is an encoder model such as BERT~\citep{kenton2019bert}. The probability of ${\hat{\boldsymbol{s}}_i}$ being a biomedical entity is then modelled as 
\begin{equation}
    \label{eq:score}
    p(\hat{\boldsymbol{s}}_i) = \sigma(\textup{MLP}(\hat{\boldsymbol{r}}_{i})),
\end{equation}
where $\textup{MLP}(\cdot)$ contains multiple linear layers and $\sigma(\cdot)$ is the sigmoid function. Meanwhile, we sample a negative text span $\tilde{\boldsymbol{n}}_i$ from a general biomedical corpus, i.e. PubMed\footnote{\label{footpubmed}\url{https://pubmed.ncbi.nlm.nih.gov/download/}}, and compute its representation as well as its probability as $\tilde{\boldsymbol{r}}_i=E(\tilde{\boldsymbol{n}}_i)$ and $p(\tilde{\boldsymbol{n}}_i) = \sigma(\textup{MLP}(\tilde{\boldsymbol{n}}_i))$, respectively. Lastly, the training objective of our model is defined as 
\begin{equation}
    \mathcal{L}_c = -\frac{1}{2|\hat{\mathcal{S}}|}\sum_{i=0}^{|\hat{\mathcal{S}}|} \ln p(\hat{\boldsymbol{s}}_i) + \ln[ 1 - p(\tilde{\boldsymbol{n}}_i)].
\end{equation}

\paragraph{Negative Sampling Filtering (NSF).} To obtain the negative sample $\tilde{\boldsymbol{n}}_i$, we first sample spans with a random length from the PubMed corpus. Then, to avoid erroneously sampling the false negatives (i.e. the spans that are biomedical entities), we encode all sampled spans into the embedding space and remove the samples that are close to the entities contained in the given dictionary. Specifically, we denote the set of sampled negative spans as $\tilde{\mathcal{N}}$. Then, $\forall \tilde{\boldsymbol{n}}_i\in\tilde{\mathcal{N}}$, it satisfies 
\begin{equation}
    \min_{\forall\hat{\boldsymbol{s}}_i\in\hat{\mathcal{S}},\forall\tilde{\boldsymbol{n}}_j\in\tilde{\mathcal{N}}}\|F(\hat{\boldsymbol{s}}_i) - F(\tilde{\boldsymbol{n}}_j)\|>t_d,
\end{equation}
where $t_d$ is a hyper-parameter that specifies the threshold of minimal distance, and $F(\cdot)$ is an off-the-shelf encoder model. In our experiments, we use SAPBert~\citep{liu2021self,liu2021learning} as the model $F(\cdot)$ since it is originally designed to aggregate the synonyms of the same biomedical concept in the adjacent space.

\paragraph{Synonym Distance Regularizer (SDR).} Intuitively, if the distinct synonyms of a single concept are concentrated in a small region, it is easier for the model to correctly identify them. Thereby, to equip our model with the ability to extract distinct synonyms of the same biomedical concept, we propose a novel regularization term---Synonym Distance Regularizer (SDR). During training, SDR first samples an anchor concept $\hat{\boldsymbol{s}}_a$ and its associated concept $\hat{\boldsymbol{s}}_p$ from the dictionary $\hat{\mathcal{S}}$.\footnote{The $\hat{\boldsymbol{s}}_a$ and $\hat{\boldsymbol{s}}_p$ share the same concept ID.} Then, a random negative sample $\tilde{\boldsymbol{n}}_n\in\tilde{\mathcal{N}}$ is sampled. Finally, SDR is computed by imposing a triplet margin loss \cite{chechik2010large} between the encoded sampled synonyms and the sampled negative term as
\begin{equation}
    \mathcal{R}_s = \max\{\|\hat{\boldsymbol{r}}_a-\hat{\boldsymbol{r}}_p\|-\|\hat{\boldsymbol{r}}_a-\tilde{\boldsymbol{r}}_n\|+\gamma_s,0\},
\end{equation}
where $\gamma_s$ is a pre-defined margin, and $\hat{\boldsymbol{r}}_a=E(\hat{\boldsymbol{s}}_a)$, $\hat{\boldsymbol{r}}_p=E(\hat{\boldsymbol{s}}_p)$, and $\tilde{\boldsymbol{r}}_n=E(\tilde{\boldsymbol{n}}_n)$, respectively. In~\cref{sec:synonym_generalization_error}, we provide a rigorous theoretical
analysis to show why reducing the distance between synonyms leads to minimizing the synonym generalization error.

\paragraph{Noise Perturbation Regularizer (NPR).} Another way to mitigate the scoring gap between biomedical synonyms is to reduce the sharpness of the scoring function's landscape \cite{foret2020sharpness,andriushchenko2022towards}. This is because the synonyms of one biomedical entity are expected to distribute in a close-by region. Based on this motivation, we propose a new Noise Perturbation Regularizer (NPR) that is defined as
\begin{equation}
    \mathcal{R}_n = \|p(\hat{\boldsymbol{s}}_i+\boldsymbol{v}) - p(\hat{\boldsymbol{s}}_i)\|,
\end{equation}
where $\hat{\boldsymbol{s}}_i$ is a biomedical entity sampled from the dictionary $\hat{\mathcal{S}}$ and $\vv$ is a Gaussian noise vector. Intuitively, NPR tries to flatten the landscape of the loss function by minimizing the loss gap between vectors within a close-by region. More discussion of increasing the function flatness can be found in~\citet{foret2020sharpness,bahri2022sharpness}. In~\cref{sec:synonym_generalization_error}, we theoretically show why NPR also leads to the reduction of synonym generalization error.

\paragraph{Overall Loss Function.} The overall learning objective of our SynGen is defined as 
\begin{equation}
    \mathcal{L}=\mathcal{L}_c+\alpha \mathcal{R}_s + \beta \mathcal{R}_n,
\end{equation}
where $\alpha$ and $\beta$ are tunable hyperparameters that regulate the importance of the two regularizers.

\subsubsection{Inference} \label{subsec:inference}
During inference, given the input text $\boldsymbol{x}$, SynGen first splits $\boldsymbol{x}$ into spans with different lengths, namely, $\mathcal{X}=\{\boldsymbol{x}_{[i:j]}|0\leq i\leq j\leq |\boldsymbol{x}|,j-i\leq m_s\}$, where $m_s$ is the maximum length of the span. Then, we compute the score of every span $\boldsymbol{x}_{[i:j]}\in\mathcal{X}$ as $p(\boldsymbol{x}_{[i:j]})$ with \Cref{eq:score}. We select the spans whose score is higher than a pre-defined threshold $t_p$ as candidates, which are then further filtered by the greedy extraction strategy as introduced below.

\paragraph{Greedy Extraction (GE).} It has been observed that a lot of biomedical terms are nested~\cite{finkel2009nested,marinho2019hierarchical}. For example, the entity \textit{T-cell prolymphocytic leukemia} contains sub-entities \textit{T-cell} and \textit{leukemia}. However, in most of the existing BioNER approaches, if one sentence $\vx$ contains the entity \textit{T-cell prolymphocytic leukemia}, these approaches usually only identify it as a single biomedical entity and ignore its sub-entities \textit{T-cell} or \textit{leukemia}. To address this issue, our SynGen applies a greedy extraction (GE) strategy to post-process the extracted biomedical terms. In particular, our GE strategy first ranks the recognized terms by their length in descending orders as $\mathcal{T}=\{\vt_1,\vt_2,\cdots,\vt_n|\forall i < j,|\vt_i|>|\vt_j|\}$ and set the initial validation sequence $\vx^{(1)}=\vx$. Then, it checks the ranked terms from $\vt_1$ to $\vt_n$. If the term $\vt_i$ is a sub-sequence of the validation sequence $\vx^{(i)}$ (i.e. $\exists p,q<|\vx^{(i)}|$, such that $\vt_i=\vx_{[p:q]}^{(i)}$), it will recognize the term $\vt_i$ as a biomedical entity and set a new validation sequence $\vx^{(i+1)}$ by removing all $\vt_i$'s occurrence in the sequence $\vx^{(i)}$.
As a result, the sub-entities inside a recognized entity will never be recognized again because they will not be contained in the corresponding validation sequence.

\subsection{Theoretical Analysis}
\label{sec:synonym_generalization_error}
Most of the existing dictionary-based frameworks suffer from a common problem that terms outside of the given dictionary cannot be easily recognized, which we refer to as the \emph{synonyms generalization problem}. To understand why the SynGen framework can resolve this problem, we give a theoretical analysis focusing on the correctness of entities in $\mathcal{S}$. Specifically, given a bounded negative log-likelihood loss function $f(\vr)=-\ln \sigma(\textup{MLP}(\vr)) \in [0, b]$,$\vr=E(\vs)$, it tends to 0 if an entity $\vs \in\mathcal{S}$ is correctly classified as positive. Otherwise, $f(\vr)$ tends to $b$ if the entity $\vs$ is wrongly recognized as a non-biomedical phrase. Following the traditional generalization error  \cite{shalev2014understanding}, we further define the average empirical error for entities in the dictionary $\hat{\mathcal{S}}$ as $\hat{R}=\frac 1 {|\hat{\mathcal{S}}|}\sum_{i=1}^{|\hat{\mathcal{S}}|} f(\hat{\vr}_i)$. To better analyze the generalization error for all synonyms, we consider the most pessimistic error gap between $\hat{R}$ and the error of arbitrary $\vs \in \mathcal{S}$, namely $f(E(\vs))$. Then,  the synonym generalization error can be defined as follows:

\begin{restatable}[synonym generalization error]{definition}{errordef}
  Given a loss function $f(\vr)\in [0, b]$, the synonym generalization error is defined as:
  $$E_s=\sup_{\vs\in \mathcal{S}}(f(E(\vs))-\hat{R}).$$
  \label{def:defsge}
  \vspace{-1em}
\end{restatable}

It can be observed from \Cref{def:defsge} that small $E_s$ implies  error $f(E(\vs))$ for arbitrary $\vs$ will no deviate too far from $\hat{R}$. Therefore, training $f$ with the dictionary terms $\hat{\mathcal{S}}$ is generalizable to the entities in the whole domain $\mathcal{S}$. To give a theoretical analysis of $E_s$, we further assume that $\hat{\mathcal{S}}$ is an $\epsilon-$net of $\mathcal{S}$, namely, $\forall \vs\in \mathcal{S}$, $\exists \hat{\vs} \in \hat{\mathcal{S}}$, such that $\|\hat{\vs} - \vs\|\le \epsilon$. Intuitively, given an entity $\vs\in\mathcal{S}$, we can always find an entity $\hat{\vs}$ in the dictionary $\hat{\mathcal{S}}$ within distance $\epsilon$. We further assume that $f$ is $\kappa$-Lipschitz, namely, $\|f(\vx)-f(\vy)\|\le \kappa \|\vx-\vy\| $. Then, we have the following bound hold.

\begin{restatable}[Synonym Generalization Error Bound]{theorem}{mainthm} 
  If $\hat{\mathcal{S}}$ is an $\epsilon-$net of $\mathcal{S}$. The loss function $f\in[0, b]$ is $\kappa$-Lipschitz continuous. We have with probability at least $1-\delta$,
  \begin{equation}
      E_s<(\kappa\epsilon+b)\sqrt{\frac{\ln |\mathcal{S}|+\ln\frac{2}{\delta}}{2}} + b\sqrt{\frac{\ln\frac{2}{\delta}}{2|\hat{\mathcal{S}}|}}.
      \label{eq:bound}
  \end{equation}
  \label{thm:main}
  \vspace{-1.6em}
\end{restatable}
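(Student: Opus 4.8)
The plan is to control $E_s=\sup_{\vs\in\mathcal{S}}(f(E(\vs))-\hat{R})$ by splitting it into two deviations, each handled on a high-probability event of failure mass $\delta/2$, and then combining them by a union bound. Write $N=|\mathcal{S}|$ and $M=|\hat{\mathcal{S}}|$, and let $R=\E[f(E(\hat{\vs}))]$ be the population risk under the distribution from which the dictionary entries are drawn, so that $\hat{R}=\frac{1}{M}\sum_i f(\hat{\vr}_i)$ is an empirical mean of $M$ independent copies of a loss lying in $[0,b]$. For every $\vs$ I would write $f(E(\vs))-\hat{R}=\big(f(E(\vs))-R\big)+\big(R-\hat{R}\big)$ and bound the two pieces separately.

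The term $R-\hat{R}$ is immediate from Hoeffding's inequality: since $\hat{R}$ averages $M$ independent terms in $[0,b]$, with probability at least $1-\delta/2$ we get $R-\hat{R}\le b\sqrt{\ln(2/\delta)/(2M)}$, which is exactly the second summand of the claimed bound.

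For the uniform term $\sup_{\vs}(f(E(\vs))-R)$, the $\epsilon$-net and Lipschitz assumptions are what make the supremum tractable. For a fixed $\vs$, choosing the net point $\hat{\vs}\in\hat{\mathcal{S}}$ with $\|\vs-\hat{\vs}\|\le\epsilon$ and invoking $\kappa$-Lipschitzness gives $f(E(\vs))\le f(E(\hat{\vs}))+\kappa\epsilon$, so the excess of $f(E(\vs))$ above the reference level $R$ is carried by a bounded quantity whose effective range is $\kappa\epsilon+b$ (the slack $\kappa\epsilon$ coming from the net, the $b$ from $f\in[0,b]$). Applying a Hoeffding tail to the dictionary-induced fluctuation of this centered quantity for each fixed $\vs$ yields a per-entity bound of the form $\exp\!\big(-2t^2/(\kappa\epsilon+b)^2\big)$; turning the supremum into a union bound over the $N$ entities of $\mathcal{S}$ and equating the total failure mass to $\delta/2$ forces $t=(\kappa\epsilon+b)\sqrt{(\ln N+\ln(2/\delta))/2}$, i.e.\ the first summand.

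Finally I would intersect the two high-probability events and add the two bounds to obtain, with probability at least $1-\delta$, the stated inequality. I expect the main obstacle to be exactly this uniform step: converting the supremum over the (potentially enormous) domain $\mathcal{S}$ into a finite union bound with the correct per-term range. The $\epsilon$-net hypothesis is precisely the device that makes this work, letting every $\vs$ be represented by a nearby dictionary point up to the additive slack $\kappa\epsilon$, while the finite cardinality $|\mathcal{S}|$ is what enters logarithmically; some care is also needed to justify the effective range $\kappa\epsilon+b$ rather than the naive $2b$, and to ensure the boundedness and independence hypotheses of Hoeffding are genuinely met in both concentration steps.
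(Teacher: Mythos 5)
Your proposal follows essentially the same route as the paper's proof in \Cref{sec:A-mainproof}: the identical decomposition $f(E(\vs))-\hat{R}=\bigl(f(E(\vs))-\E f\bigr)+\bigl(\E \hat{R}-\hat{R}\bigr)$ (the paper's ``evenly sampled'' assumption makes your $R$ equal to both expectations), the same use of the $\epsilon$-net plus $\kappa$-Lipschitzness to shrink the effective range to $\kappa\epsilon+b$, a per-entity Hoeffding tail with a union bound over the $|\mathcal{S}|$ entities for the uniform term, standard Hoeffding over the $|\hat{\mathcal{S}}|$ dictionary samples for the empirical term, and an intersection of two failure events of mass $\delta/2$ each. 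This matches the paper's argument step for step, so no further comparison is needed.
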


The proof can be found in \Cref{sec:A-mainproof}. It can be observed from \Cref{thm:main} that reducing the synonym distance upper bound $\epsilon$ or function $f$'s Lipschitz constant $\kappa$ can help reduce the generalization error gap $E_s$.

\Cref{thm:main} explains why both SDR and NPR are able to help improve the NER performance. (1) For SDR, it allows the model to learn to reduce the distance between synonyms, which is equivalent to reducing $\epsilon$ of \Cref{eq:bound}. Therefore, it helps to reduce the synonym generalization error upper bound. (2) For NPR, it helps reducing the Lipschitz constant $\kappa$ because given a vector $\vv$, minimizing $\mathcal{R}_n$ is equivalent to minimizing $$\frac{\|f(\hat\vx_i+ \vv)-f(\hat\vx_i)\|}{\|\vv\|}=\frac{\|f(\hat\vx_i+ \vv)-f(\hat\vx_i)\|}{\|(\hat\vx_i+ \vv)-\hat\vx_i\|},$$ as $\vv$ is fixed during the parameter optimization procedure. Therefore, optimizing $\mathcal{R}_n$ is a necessary condition for reducing $f$'s Lipschitz constant $\kappa$.

\section{Experiments}

\subsection{Experimental Setup}
We evaluate all the models on 6 popular BioNER datasets, including two in the disease domain (i.e. NCBI disease \cite{dougan2014ncbi} and BC5CDR-D \cite{li2016biocreative}), two in the chemical domain (i.e. BC4CHEMD \cite{krallinger2015chemdner} and BC5CDR-C \cite{li2016biocreative}) and two in the species domain (i.e. Species-800 \cite{pafilis2013species} and LINNAEUS \cite{gerner2010linnaeus}). Note that BC5CDR-D and BC5CDR-C are splits of the BC5CDR dataset~\cite{li2016biocreative} for evaluating the capability of recognizing entities in the disease and chemical domains respectively, following \citet{lee2020biobert}. We evaluate the performance by reporting the Precision (P), Recall (R), and F$_1$ scores. 
The entity name dictionary used in our model is extracted by the concepts' synonyms in diseases, chemicals, and species partition from UMLS \cite{bodenreider2004unified}. The negative spans are randomly sampled from the PubMed corpus \footref{footpubmed}. 

We tune the hyper-parameters of SDR (i.e. $\alpha$), NPR (i.e. $\beta$), threshold constant $t_d$, $t_p$ and maximal span length $m_s$ by using grid search on the development set and report the results on the test set. The hyper-parameter search space as well as the best corresponding setting can be found in \Cref{sec:A-searchspace}.
We use PubMedBert \cite{gu2021domain} as the backbone model by comparing the development set results from PubMedBert, SAPBert \cite{liu2021self,liu2021learning},  BioBert \cite{lee2020biobert}, SciBert \cite{beltagy2019scibert}, and Bert \cite{kenton2019bert} (Detailed comparison can be found in \Cref{sec:A-fullresult}). Our experiments are conducted on a server with NVIDIA GeForce RTX 3090 GPUs. For all the experiments, we report the average performance of our used three metrics over 10 runs.



\subsection{Comparison Models}
We compare our model with baseline models depending on different kinds of annotations or extra efforts. The supervised models (BioBert and SBM) require golden annotations. The distantly supervised models (SBMCross, SWELLSHARK and AutoNER) depend on several different kinds of annotation or extra efforts which will be discussed in the corresponding models. The dictionary-based models mainly use the UMLS dictionary.

\begin{table*}[t]
  \centering
  \footnotesize
  \resizebox{1.\textwidth}{!}{
  \begin{tabular}{l@{~}l@{~}@{~}c@{~}@{~}c@{~}@{~}c@{~}@{~}c@{~}@{~}c@{~}@{~}c@{~}@{~}c@{~}@{~}c@{~}@{~}c@{~}@{~}c@{~}@{~}c@{~}@{~}c@{~}@{~}c@{~}@{~}c@{~}@{~}c@{~}@{~}c@{~}@{~}c@{~}@{~}c@{~}@{~}c@{~}@{~}c@{~}@{~}c@{~}}
  \toprule
  {} & \multirow{2}{*}{\textbf{Model}} &                         \multicolumn{3}{c}{NCBI} & \multicolumn{3}{c}{BC5CDR-D} & \multicolumn{3}{c}{BC5CDR-C} & \multicolumn{3}{c}{BC4CHEMD} & \multicolumn{3}{c}{Species-800}  & \multicolumn{3}{c}{LINNAEUS} & \multicolumn{3}{c}{AVG}  \\
  \cmidrule(lr){3-23}
  {} & {} &  P & R & F$_1$ & P & R & F$_1$ & P & R & F$_1$ & P & R & F$_1$ & P & R & F$_1$ & P & R & F$_1$ & P & R & F$_1$\\  
\midrule
\multirow{8}{*}{\rotatebox[origin=c]{90}{\mbox{\fontsize{8.}{8}\selectfont {(Distantly) Supervised}}}\ } & BioBert$^\natural$            &                      88.2 &           91.2 &           89.7 &           86.5 &           87.8 &           87.2 &           93.7 &           93.3 &           93.5 &           92.8 &           91.9 &           92.4 &           72.8 &           75.4 &           74.1 &           90.8 &           85.8 &           88.2 &           87.5 &           87.6 &           87.5 \\
&SBM$^\natural$                &                      88.4 &           88.9 &           88.6 &           83.4 &           86.4 &           84.9 &           93.2 &           93.6 &           93.4 &           92.0 &           86.6 &           89.2 &           99.5 &           91.6 &           95.4 &           99.8 &           80.1 &           88.9 &           92.8 &           87.9 &           90.1 \\
&SBMCross$^\diamondsuit$           &                     75.9 &           58.3 &           66.0 &           70.1 &           61.3 &           65.4 &           94.1 &           86.4 &           90.1 &           72.2 &           63.2 &           67.4 &           64.2 &           64.5 &           64.3 &           78.8 &           45.8 &           57.9 &           75.9 &           63.2 &           68.5 \\
&SWELLSHARK$^{\flat\triangle}$         &               64.7 &           69.7 &           67.1 &           80.7 &           77.6 &           79.1 &           88.3 &           88.3 &           88.3 &              - &              - &              - &              - &              - &              - &              - &              - &              - &           77.9 &           78.5 &           78.2 \\
&AutoNER$^{\heartsuit\sharp}$            &               79.4 &           72.0 &           75.5 &           86.2 &           67.9 &           76.0 &           85.2 &           84.2 &           84.7 &           91.1 &           18.9 &           31.3 &           86.6 &           90.9 &           88.7 &           92.1 &           95.6 &           93.8 &           86.8 &           71.6 &           75.0 \\
&AutoNER w/o DT$^{\heartsuit}$     &                    66.8 &           32.4 &           43.6 &           72.0 &           17.3 &           27.9 &           89.7 &           67.3 &           76.9 &           90.7 &           19.7 &           32.4 &           57.6 &           50.7 &           53.9 &           88.4 &           39.0 &           54.1 &           77.5 &           37.7 &           48.1 \\
&AutoNER w/o IDC$^{\sharp}$    &                    85.1 &           19.1 &           31.2 &           87.1 &           40.4 &           55.2 &           94.2 &           37.3 &           53.4 &           91.2 &           18.8 &           31.2 &           83.6 &           18.5 &           30.3 &           90.4 &           62.8 &           74.1 &           88.6 &           32.8 &           45.9 \\
\hline
\multirow{8}{*}{\rotatebox[origin=c]{90}{\mbox{\fontsize{8.}{8}\selectfont {Dictionary-Based}}}}&AutoNER w/o DT+IDC             &           57.9 &            9.7 &           16.6 &           63.0 &           13.9 &           22.8 &           92.8 &           39.3 &           55.2 &           60.9 &           24.6 &           35.1 &           59.8 &           25.0 &           35.3 &           80.1 &           33.0 &           46.8 &           69.1 &           24.2 &           35.3 \\
&EmbSim                                                 &           56.7 &           24.9 &           34.6 &           61.8 &           14.3 &           23.2 &           71.7 &           61.2 &           66.0 &           47.4 &           24.7 &           32.4 &           49.0 &           34.2 &           40.3 &           80.4 &           42.9 &           55.9 &           61.2 &           33.7 &           42.1 \\
&MetaMap                                                         &           61.8 &           27.8 &           38.4 &           69.3 &           13.3 &           22.3 &           65.9 &           63.5 &           64.7 &           33.1 &           25.2 &           28.6 &           56.9 &           48.7 &           52.5 &           85.5 &           44.3 &           58.3 &           62.1 &           37.1 &           44.1 \\
&MetaMap (Uncased)                                               &           58.4 &           27.5 &           37.4 &           63.5 &           18.4 &           28.6 &  \textbf{94.8} &           64.1 &           76.5 &  \textbf{86.2} &           24.0 &           37.5 &           49.1 &           52.3 &           50.6 &           79.1 &           49.6 &           61.0 &           71.9 &           39.3 &           48.6 \\
&SPED                                                            &           59.3 &           30.1 &           39.9 &           68.2 &           14.3 &           23.7 &           65.6 &           63.9 &           64.8 &           33.0 &           25.4 &           28.7 &           56.0 &           49.4 &           52.5 &           85.3 &           44.7 &           58.7 &           61.2 &           38.0 &           44.7 \\
&TF-IDF                                                         &           26.1 &           29.7 &           27.7 &           32.0 &           22.6 &           26.4 &           74.1 &           65.4 &           69.5 &           19.1 &           39.3 &           25.7 &           42.5 &           21.4 &           28.4 &           77.3 &           40.5 &           53.1 &           45.2 &           36.5 &           38.5 \\
&QuickUMLS                                             &  \textbf{80.4} &           17.2 &           28.4 &  \textbf{93.5} &           14.5 &           25.1 &           93.2 &           56.9 &           70.7 &           82.7 &           16.9 &           28.1 &  \textbf{61.7} &           46.7 &           53.2 &  \textbf{88.2} &           44.7 &           59.3 &  \textbf{83.3} &           32.8 &           44.1 \\
&SynGen                                                             &           68.8 &  \textbf{64.1} &  \textbf{66.2} &           63.8 &  \textbf{63.4} &  \textbf{63.5} &           85.0 &  \textbf{83.9} &  \textbf{84.4} &           56.4 &  \textbf{51.1} &  \textbf{53.6} &           58.8 &  \textbf{65.7} &  \textbf{62.0} &           84.9 &  \textbf{66.2} &  \textbf{74.4} &           69.6 &  \textbf{65.7} &  \textbf{67.4} \\
  \bottomrule
  \end{tabular}
  }

\caption{Main results. We repeat each experiment for 10 runs and report the averaged scores. For BioBert and SWELLSHARK, we report the score from the original paper. We mark the extra effort involved with superscripts, where $^\natural$ is gold annotations; $^\diamondsuit$ is in-domain annotations; $^\flat$ is regex design; $^\triangle$ is special case tuning; $^\heartsuit$ is in-domain corpus; $^\sharp$ is dictionary tailor. The bold values indicate the best performance among the dictionary-based models. The standard deviation analysis is in \Cref{fig:varbox}.} 
\label{tab:main}
\vspace{-1em}
\end{table*}

\noindent\textbf{BioBert} \cite{lee2020biobert} first pre-trains an encoder model with biomedical corpus and then fine-tunes the model on annotated NER datasets.


\noindent\textbf{SBM} is a standard Span-Based Model \cite{lee2017end,luan2018multi,luan2019general,wadden2019entity,zhong2021frustratingly} for NER task. We use the implementation by \citet{zhong2021frustratingly}.

\noindent\textbf{SBMCross} utilizes the same model as SBM. We follow the setting of \citet{langnickel2021we} to train the model on one dataset and test it on the other dataset in the same domain which is referred as the in-domain annotation. For example, in the NCBI task, we train the model on the BC5CDR-D dataset with SBM and report the results on the NCBI test set.

\noindent\textbf{SWELLSHARK} \cite{fries2017swellshark} proposes to first annotate a corpus with weak supervision. Then it uses the weakly annotated dataset to train an NER model. It requires extra expert effort for designing effective regular expressions as well as hand-tuning for some particular special cases.

\noindent\textbf{AutoNER} \cite{wang2019distantly,shang2020learning} propose to first train an AutoPhase \cite{shang2018automated} tool and then tailor a domain-specific dictionary based on the given in-domain corpus. The corpus is the original training set corpus without human annotation. Afterwards, it trains the NER model based on the distantly annotated data. We also report the ablation experiments by removing the dictionary tailor or replacing the in-domain corpus with evenly sampled PubMed corpus.

\noindent\textbf{EmbSim} first uses a pre-trained model to encode the input spans and the entities in the UMLS dictionary into the corresponding vector representation. Then, it calculates the minimal distance between a given span and 
an arbitrary entity in the dictionary. It recognizes spans with minimal distances smaller than a threshold as named entities. We report the performance based on BioBert. For the results using other backbone models, please refer to \Cref{sec:A-fullresult}.

\noindent\textbf{MetaMap} \cite{aronson2001effective,divita2014sophia,soldaini2016quickumls} proposes to perform exact concept mapping of spans with the entities in the UMLS dictionary. We report the results for both cased and uncased matching.

\noindent\textbf{SPED} \cite{rudniy2012histogram,song2015developing} calculates the Shortest Path Edit Distances between the query span and each entity in the dictionary. Then it recognizes a span as a biomedical entity if the minimal distance ratio is smaller than a specific threshold.

\noindent\textbf{TF-IDF} follows the model of \citet{ujiie2021end}. We remove the component utilizing the annotated data and only keep the tf-idf features to make it comparable to other models without extra effort.

\noindent\textbf{QuickUMLS} \cite{soldaini2016quickumls} is a fast approximate UMLS dictionary matching system for medical entity extraction. It utilizes Simstring \cite{okazaki2010simple} as the matching model. We report the scores based on Jaccard distance. For the performance using other distances, please refer to \Cref{sec:A-fullresult}.

\subsection{Experimental Results}

\textbf{Main Results.} We first compare the overall performance of our proposed SynGen framework with the baseline models, and the results are shown in \Cref{tab:main}. It can be observed that: (1) Our proposed SynGen model outperforms all other dictionary-based models in terms of $F_1$ score. This is because it alleviates the synonym generalization problem and can extract entities outside of the dictionary. As a result, the recall scores are significantly improved. (2) By comparing SBM and SBMCross, we can find that the performance is very sensitive to the selection of the in-domain corpus. Even using the golden annotation of another training set in the same domain can lead to a sharp performance decrease. Therefore, preparing a good in-domain corpus is quite challenging. (3) SynGen is already comparable to the performance of SBMCross with average $F_1$ scores of 67.4 and 68.5 respectively. It shows that our dictionary-based model is comparable to a supervised model without in-domain data. (4) The precision score (P) of QuickUMLS is very high. This is because it is mainly based on matching exact terms in the dictionary and may not easily make mistakes. However, it cannot handle the synonyms out of the UMLS dictionary. As a result, it fails to retrieve adequate synonyms, leading to low recall scores. (5) By comparing the ablation experiments for AutoNER, we can conclude that the human labor-intensive dictionary tailor and the in-domain corpus selection are very important for distantly supervised methods. Without the dictionary tailor or the in-domain corpus, the performance drops sharply. It should also be noted that our model outperforms the AutoNER model without in-domain corpus or dictionary tailoring significantly. It shows the effectiveness of our model without using extra effort (e.g. tailored dictionary).

\begin{figure*}[t]
\makebox[\linewidth][c]{
\centering
\begin{minipage}[t]{0.25\textwidth}
\centering
    \includegraphics[width=0.97\columnwidth]{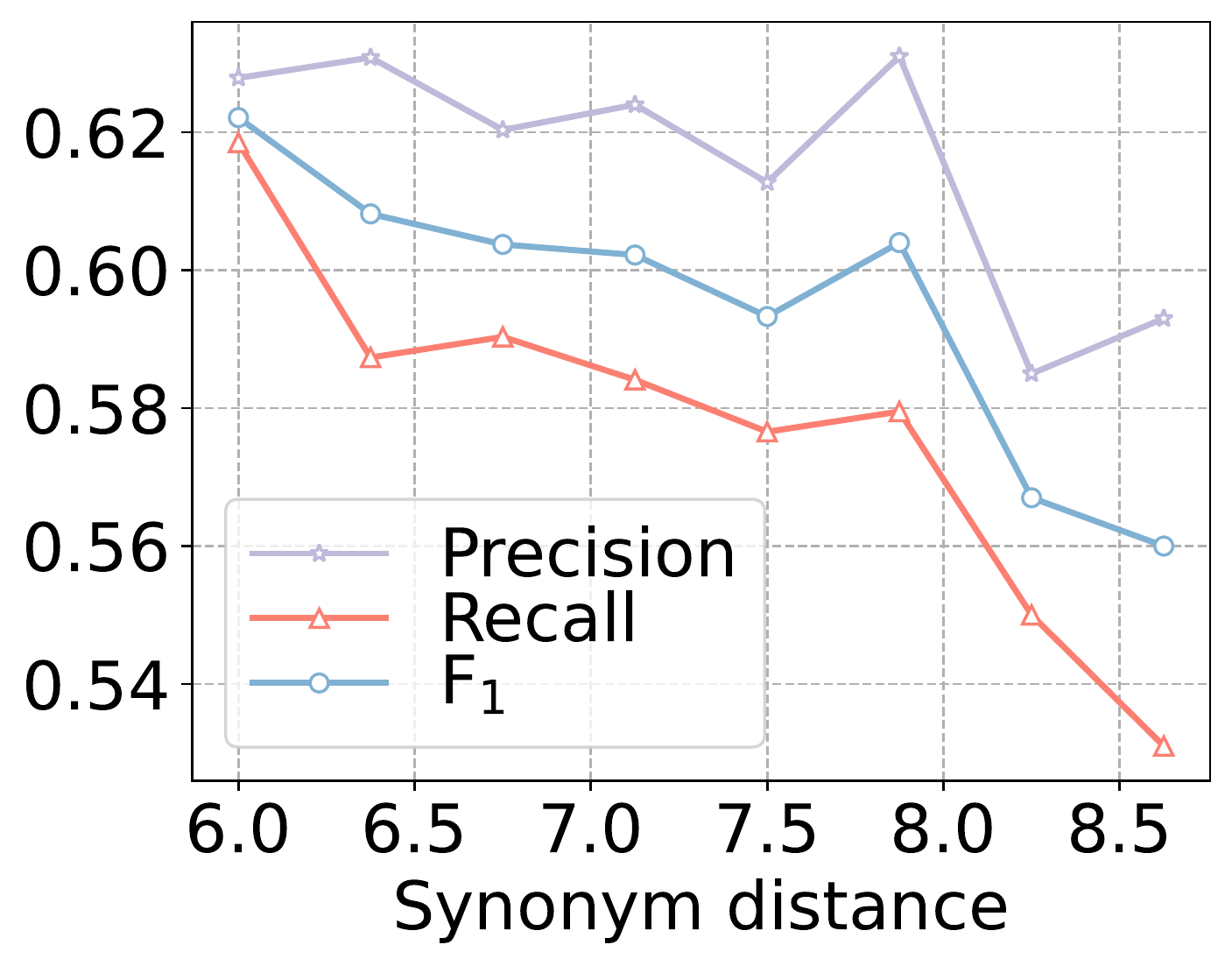}
    \begin{minipage}[t]{0.9\columnwidth}
    \caption{Influence of synonym distance.}
    \label{fig:synodis} 
    \end{minipage}
\end{minipage}
\begin{minipage}[t]{0.25\textwidth}
    \centering
    \includegraphics[width=0.97\columnwidth]{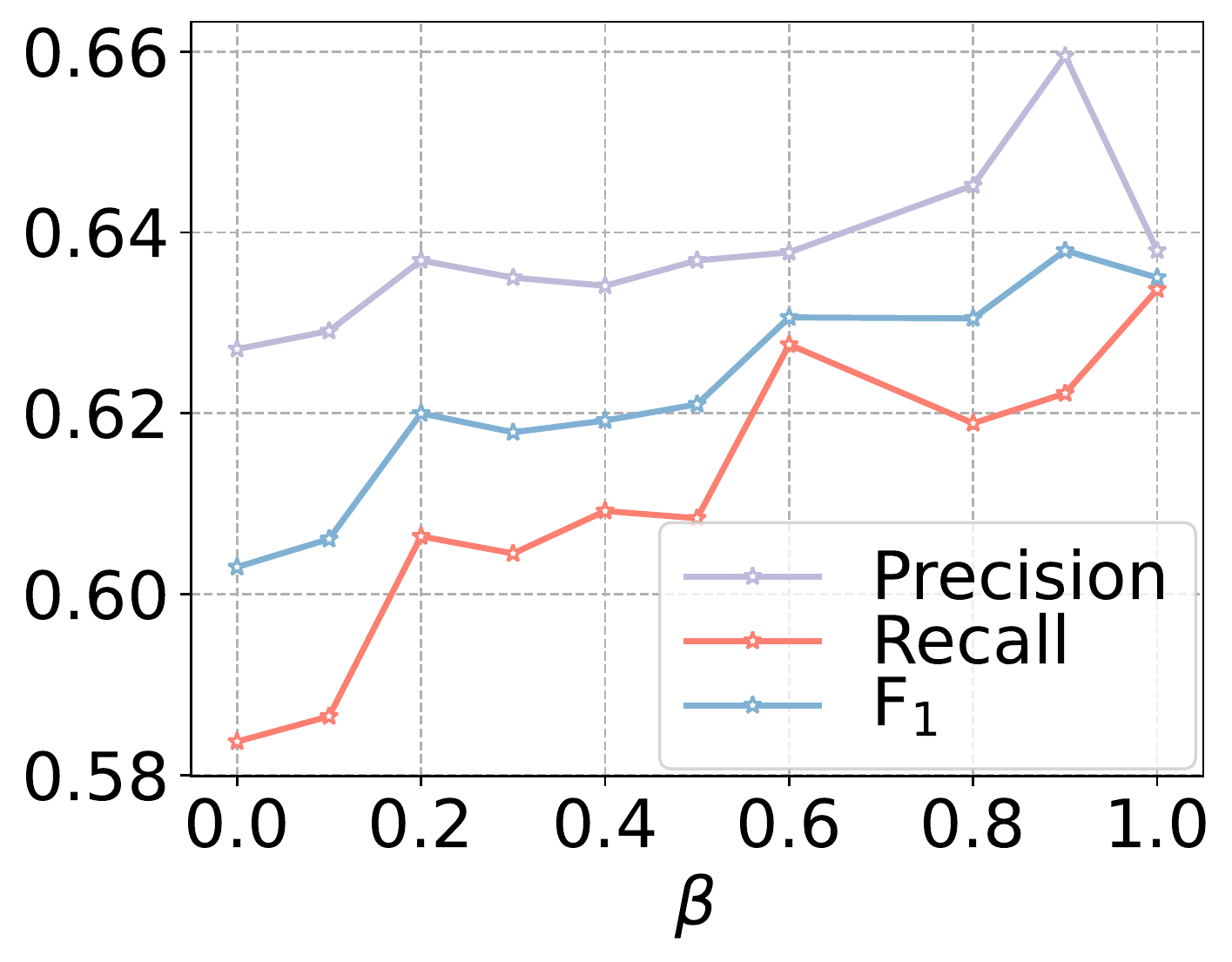}
    \begin{minipage}[t]{0.9\columnwidth}
    \caption{Influence of NPR.}
    \label{fig:noisescore} 
    \end{minipage}
\end{minipage}
\begin{minipage}[t]{0.25\textwidth}
    \centering
    \includegraphics[width=1.\columnwidth]{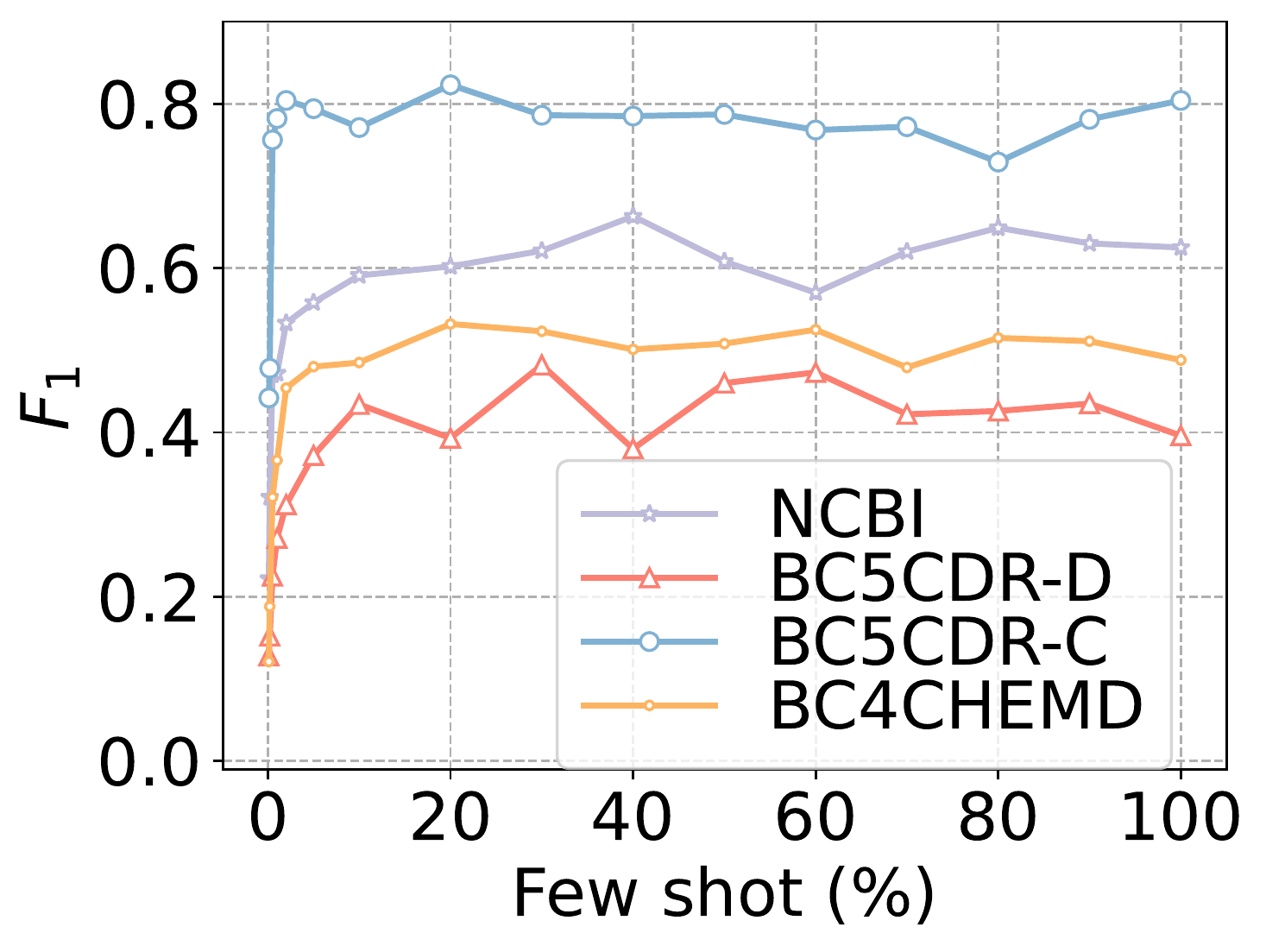}
    \begin{minipage}[t]{0.9\columnwidth}
    \caption{Few-shot analysis.}
    \label{fig:portionf1} 
    \end{minipage}
\end{minipage}
\begin{minipage}[t]{0.25\textwidth}
    \centering
    \includegraphics[width=1.\columnwidth]{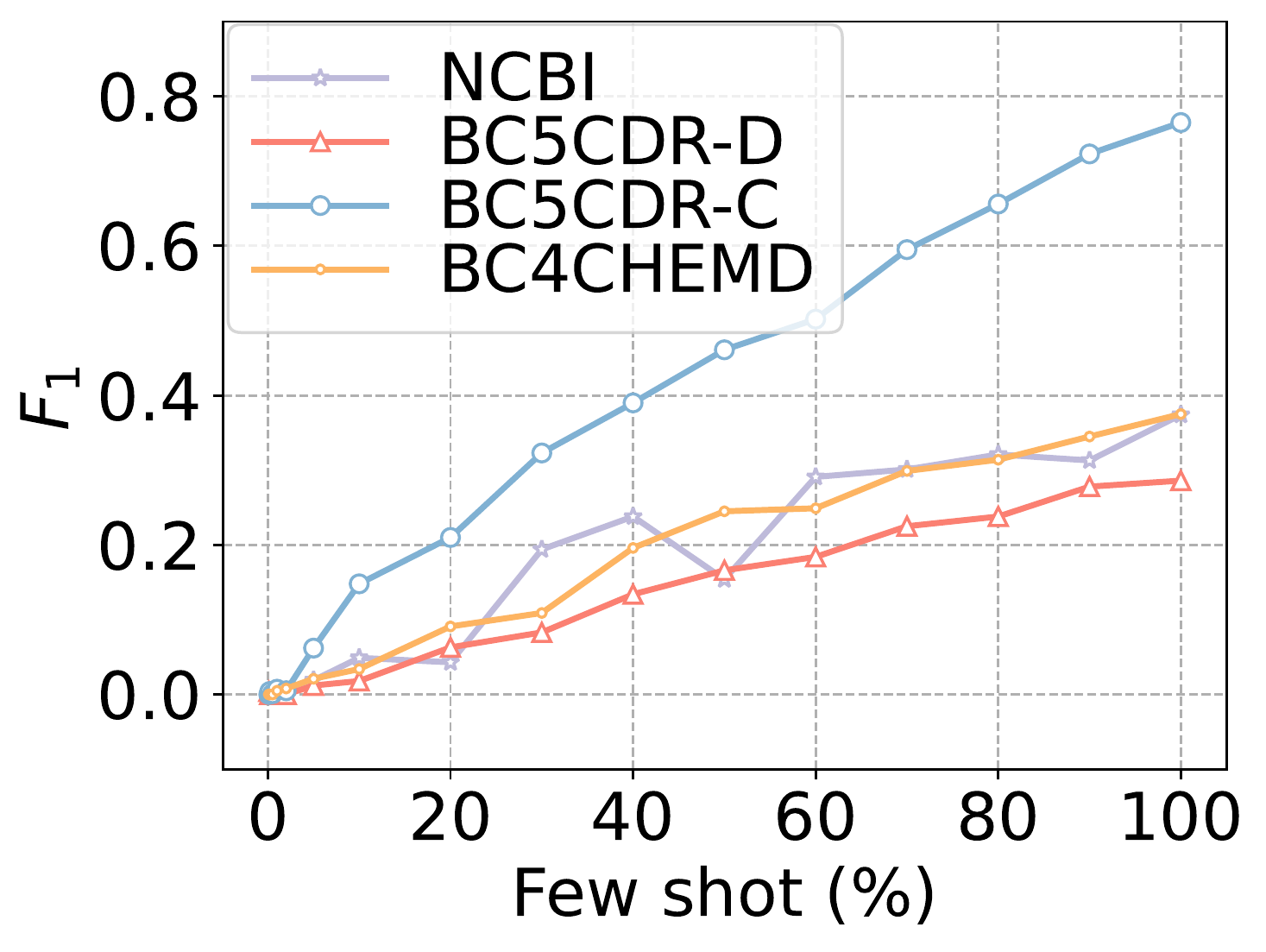}
    \begin{minipage}[t]{0.9\columnwidth}
    \caption{Few-shot analysis for MetaMap.}
    \label{fig:wordmatchportionf1}
    \end{minipage}
\end{minipage}
}
\end{figure*}

\begin{table}[t]
  \centering
  \scriptsize
  
  \begin{tabular}{@{~}l@{~}@{~}l@{~}@{~}L{.7cm}@{~}@{~}L{.7cm}@{~}@{~}L{.7cm}@{~}@{~}L{.7cm}@{~}@{~}L{.7cm}@{~}@{~}l@{~}}
  \toprule
  {} &        NCBI &    BC5C DR-D &    BC5C DR-C &    BC4C HEMD &        Speci es-800 &    LINN AEUS &         AVG \\
  \midrule
  SynGen             &  \textbf{66.2} &  \textbf{63.5} &  \textbf{84.4} &  \textbf{53.6} &  \textbf{62.0} &  \textbf{74.4} &  \textbf{67.4} \\
  \quad w/o SDR     &           66.2 &           63.1 &           80.8 &           51.1 &           60.6 &           73.0 &           65.8 \\
  \quad w/o NPR     &           66.2 &           62.8 &           78.0 &           49.7 &           60.3 &           73.1 &           65.0 \\
  \quad w/o NPR+SDR &           64.7 &           58.7 &           76.9 &           49.0 &           59.7 &           72.0 &           63.5 \\
  \quad w/o NSF     &           60.3 &           49.5 &           76.8 &           49.0 &           54.3 &           54.7 &           57.4 \\
  \quad w/o GE      &           49.8 &           54.4 &           69.4 &           33.4 &           52.1 &           71.7 &           55.1 \\
  \bottomrule
  \end{tabular}
  
\caption[Ablation study.]{Ablation study\protect\footnotemark.}

\label{tab:ablation}
\end{table}

\footnotetext{\label{note1}The full results including precision and recall results can be found in Appendix \ref{sec:A-fullresult}}

\paragraph{Ablation Study.} In order to show the effectiveness of our model's each component, we conduct several ablation experiments over different variants of our model, and the results are shown in \Cref{tab:ablation} and \Cref{tab:backbone}. We have the following observations from the two tables. (1) By comparing the vanilla SynGen model with its variants, such as SynGen w/o NPR, SynGen w/o SDR, and SynGen  w/o NPR + SDR, we can observe that removing any component results in performance drop, which means that our proposed NPR and SDR component can effectively improve the performance. This observation further verifies the correctness of our theory in \Cref{thm:main}. (2) By comparing SynGen with SynGen w/o NSF, we can find that the negative sample filtering component is also quite important in finding proper negative samples. (3) By comparing SynGen with SynGen w/o GE model, we also observe a significant performance drop, which concludes that the greedy extraction strategy does help improve the overall performance. Specifically, it helps improve the precision as it will not extract sub-entities inside other entities and thus avoid false positive extractions. (4) We also try to use different backbone models including PubMedBert, SAPBert, BioBert, Bert, and SciBert and the results are shown in \Cref{tab:backbone}. Our experiments show that PubMedBert performs the best. These results may be caused by the different training corpus used by each pre-trained model.

\begin{table}[t]
  \centering
  \scriptsize
  \begin{tabular}{p{1.3cm}@{~}@{~}p{.7cm}@{~}@{~}p{.7cm}@{~}@{~}p{.7cm}@{~}@{~}p{.7cm}@{~}@{~}p{.7cm}@{~}@{~}p{.75cm}@{~}@{~}p{.7cm}@{~}@{~}p{.6cm}}
  \toprule
  {Backbone \, of SynGen} &  NCBI &    BC5C DR-D &    BC5C DR-C &    BC4C HEMD &        Speci es-800 &    LINN AEUS &         AVG \\
  \midrule
PubMedBert             &  66.2 &  \textbf{63.5} &  \textbf{84.4} &  \textbf{53.6} &  \textbf{62.0} &  \textbf{74.4} &  \textbf{67.4} \\
SAPBert    &           \textbf{66.3} &           62.1 &           83.8 &           42.4 &           58.1 &           71.6 &           64.0 \\
BioBert    &           62.4 &           57.4 &           81.0 &           31.7 &           52.4 &           59.3 &           57.4 \\
Bert       &           62.0 &           56.1 &           79.6 &           47.1 &           55.2 &           62.1 &           60.4 \\
SciBert    &           61.6 &           55.7 &           81.0 &           49.3 &           53.1 &           63.7 &           60.7 \\
  \bottomrule
  \end{tabular}
  
\caption[Backbone study.]{Comparison of different backbone models\footref{note1}.}
\label{tab:backbone}
\end{table}

\footnotetext{\label{note1}The full results including precision and recall can be found in \Cref{sec:A-fullresult}}

\paragraph{Influence of Synonym Distance Regularizer.} To show that regularizing the synonym distance by the SDR component does help improve the overall performance, we plot how the synonym distance changes as the SDR weight increases in \Cref{fig:weightdist}. Specifically, we train the model with different hyper-parameter $\alpha$ and measure the synonym distance by sampling 10,000 synonym pairs from UMLS. Then, we calculate the average distance between synonym name pairs for different $\alpha$. As suggested in \Cref{fig:weightdist}, as $\alpha$ increases, the synonym distance decreases which shows the effectiveness of the SDR component in controlling the synonym distance. On the other hand, we also plot how the evaluation scores change as the synonym distance increases in \Cref{fig:synodis}. For the previous results, we further split the distance intervals into 8 segments and get the average overall performance in each interval. The results indicate that as the synonym distance is regularized (decreases), the overall performance increases. This observation shows the effectiveness of our proposed SDR component and justifies the analysis in \Cref{thm:main}.

\begin{figure}[t]
  \centering
  \includegraphics[width=0.6\columnwidth]{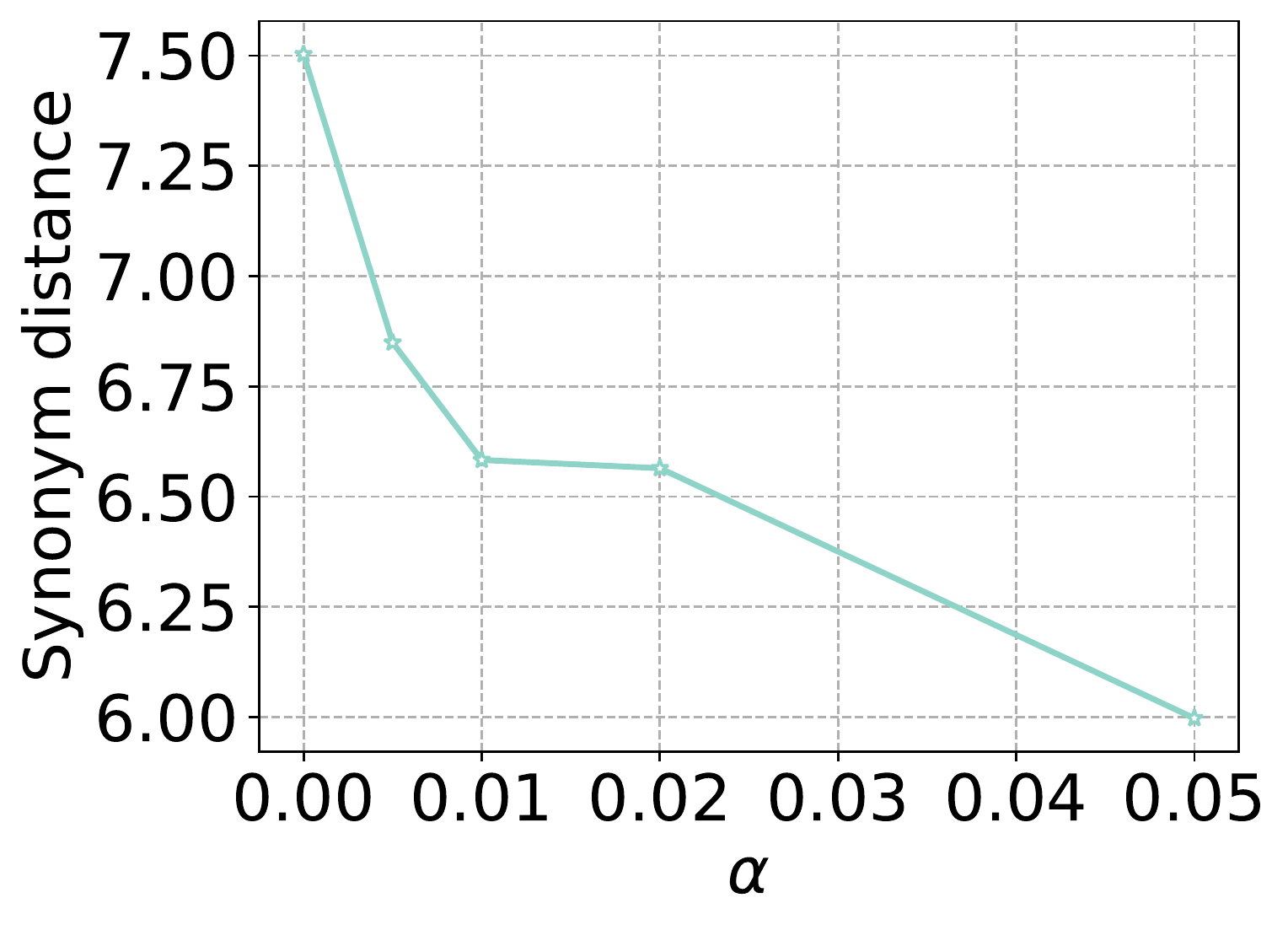}
  \caption{Influence of synonym distance regularizer's weight $\alpha$.}
  \label{fig:weightdist} 
\end{figure}

\begin{figure*}[t]
  \centering
  \includegraphics[width=2.1\columnwidth]{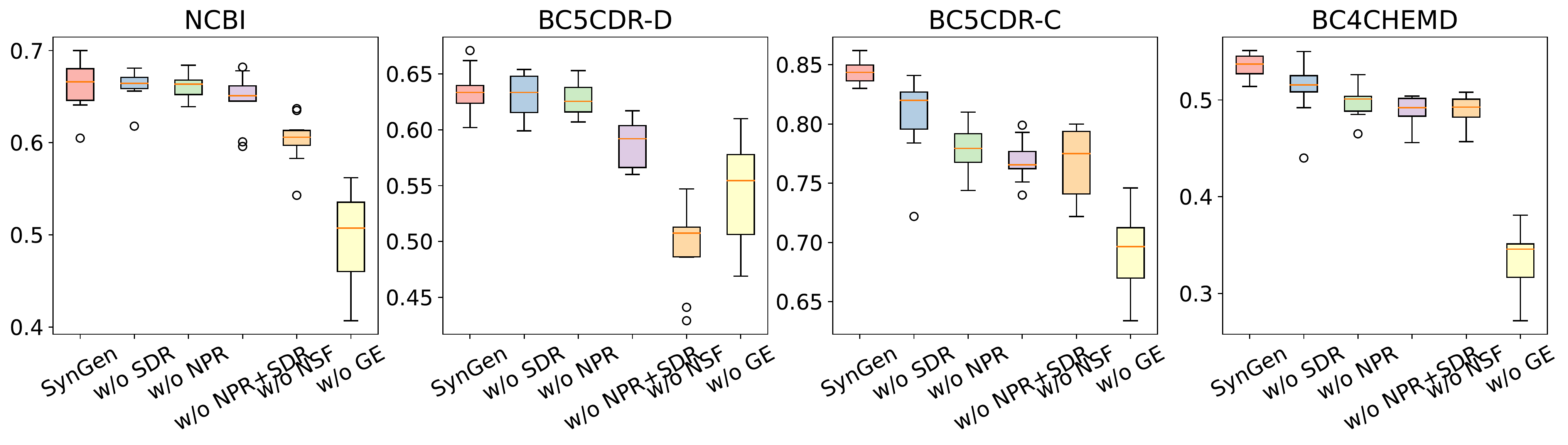}
  \caption{The box plot of each model's F$_1$ score over 10 runs.}
  \label{fig:varbox} 
\end{figure*}

\paragraph{Influence of Noise Perturbation.} To show the usefulness of our proposed NPR component, we plot how the scores change as the NPR weight (i.e. $\beta$) changes. The results are shown in \Cref{fig:noisescore}. It indicates that as $\beta$ increases, all metrics including precision, recall, and F$_1$ scores increase. This observation shows that the NPR component does help improve the performance which also justifies our theoretical analysis.

\paragraph{Few-Shot Analysis.} To further show that our SynGen framework can be applied in the few-shot scenarios, we run the model with part of the original dictionary entries with the dictionary size ratio ranging in $\{0.1\%, 0.2\%, 0.5\%, 1\%, 2\%, 5\%, 10\%, 20\%, 30\%,$ $ 40\%, 50\%, 60\%, 70\%, 80\%, 90\%, 100\%\}$. The results are shown in \Cref{fig:portionf1}. To better show the capability of few-shot learning, we also draw the same figure for the MetaMap model as shown in \Cref{fig:wordmatchportionf1}. It can be concluded from the results that in SynGen, when the dictionary size is very small, the performance increases as the dictionary size increases. Using nearly only 20\% dictionary entries can achieve comparable results as using the full set of dictionaries. However, in the MetaMap model, the performance increases linearly as the dictionary size increases. It shows that the word match based model cannot handle the few-shot cases. This observation shows the potency of using our approach to undertake few-shot learning. It should be noted that the performance stops increasing after a certain ratio. This observation can also be explained with \Cref{thm:main}. Increasing the dictionary size can only mitigate the second term in \Cref{eq:bound}. After the second term decreases to a certain amount, the first term dominates the error and continues increasing the dictionary size cannot further reduce the upper bound. This observation can further verify the correctness of our theoretical analysis. To further improve the performance, we should consider reducing the first term in \Cref{eq:bound} and this is how our proposed SDR and NPR work.

\paragraph{Standard Deviation Analysis.} To further show the effectiveness and consistency of our proposed components, we conduct a standard deviation analysis as shown in \Cref{fig:varbox}. We run each model for 10 runs with different random seeds and draw the box plot of the F$_1$ scores. We can see from \Cref{fig:varbox} that our proposed SynGen consistently outperforms the model variants without the proposed components, which further validates the effectiveness consistency of each component of our SynGen and the effectiveness of the overall SynGen framework.

\begin{table}[t]
\centering
\scriptsize

\begin{tabular}{L{2cm}L{0.8cm}L{1.2cm}L{0.6cm}L{0.7cm}}
\toprule
{} & SynGen& SynGen w/o NPR+SDR& Label & UMLS \\
\midrule
manic - depressive illness            &      ✓ &                  ✗ &     ✓ &       ✗ \\
affective disorders                   &      ✓ &                  ✗ &     ✓ &       ✗ \\
atrophic benign epidermolysis bullosa &      ✓ &                  ✗ &     ✓ &       ✗ \\
renal cell carcinoma                  &      ✓ &                  ✗ &     ✓ &       ✓ \\
Friedreich ataxia                     &      ✓ &                  ✗ &     ✓ &       ✓ \\
progressive optic atrophy             &      ✗ &                  ✓ &     ✗ &       ✗ \\
primary prostate cancer               &      ✗ &                  ✓ &     ✗ &       ✗ \\
nystagmus                             &      ✗ &                  ✓ &     ✗ &       ✗ \\
chronic                               &      ✗ &                  ✓ &     ✗ &       ✗ \\
human copper overload disease         &      ✗ &                  ✓ &     ✗ &       ✗ \\
\bottomrule
\end{tabular}

\caption{Case Study.}
\label{tab:casestudy}
\end{table}

\paragraph{Case Study.} We conduct a case study to demonstrate how our proposed NPR and SDR components enhance performance. As shown in \Cref{tab:casestudy}, we select a range of terms from the NCBI corpus. Using SynGen, both with and without the NPR+SDR component, we predict whether each term is a biomedical term. A term predicted as a biomedical term is marked with a check mark (✓); otherwise, it's marked with a cross (✗). Our findings reveal that SynGen accurately identifies certain terms like ``maternal UPD 15'' as biomedical, even if they are not indexed with the UMLS. However, without the NPR+SDR components, the system cannot recognize such terms, underscoring the significance of these components. Moreover, SynGen is designed to prevent misclassification of common words (e.g., ``man'', ``breast'') and peculiar spans like ``t (3; 15)'' as biomedical entities. Without our proposed components, the system might erroneously categorize them, further emphasizing the essential role of the NPR+SDR components in SynGen.

\section{Related Works}

Existing NER methods can be divided into three categories, namely, supervised methods, distantly supervised methods, and dictionary-based methods. In supervised methods, \citet{lee2020biobert} propose to pre-train a biomedical corpus and then fine-tune the NER model. \citet{wang2019cross,cho2019biomedical,weber2019huner,weber2021hunflair} develop several toolkits by jointly training NER model on multiple datasets. However, supervised annotation is quite expensive and human labor-intensive.

In the distantly supervised methods,
\citet{lison2020named,lison2021skweak,ghiasvand2018learning,meng2021distantly,liang2020bond} propose to first conduct a weak annotation and then train the BioNER model on it. \citet{fries2017swellshark,basaldella2020cometa} propose to use utilize a well-designed regex and special case rules to generate weakly supervised data while \citet{wang2019distantly,shang2020learning} train an in-domain phase model and make a carefully tailored dictionary. However, these methods still need extra effort to prepare a high-quality training set.

In the dictionary-based methods,
\citet{zhang2013unsupervised} propose a rule-based system to extract entities.
\citet{aronson2001effective,divita2014sophia,soldaini2016quickumls,giannakopoulos2017unsupervised,he2017autoentity,basaldella2020cometa} propose to apply exact string matching methods to extract the named entities. \citet{rudniy2012histogram,song2015developing} propose to extract the entity names by calculating the string similarity scores while QuickUMLS \cite{soldaini2016quickumls} propose to use more string similarity scores \cite{okazaki2010simple} to do a fast approximate dictionary matching. To the best of our knowledge, there is still no existing work that gives a theoretical analysis of the synonyms generalization problem and proposes a corresponding method to solve it.

\section{Conclusion}
This paper proposes a novel synonym generalization framework, i.e. SynGen, to solve the BioNER task with a dictionary. We propose two novel regularizers to further make the terms generalizable to the full domain. We conduct a comprehensive theoretical analysis of the synonym generalization problem in the dictionary-based biomedical NER task to further show the effectiveness of the proposed components. We extensively evaluate our approach on a wide range of benchmarks and the results verify that SynGen outperforms previous dictionary-based models by notable margins.

\section*{Limitations}
Although the dictionary-based methods achieve considerable improvements, there is still an overall performance gap compared with the supervised models. Therefore, for domains with well-annotated data, it is still recommended to apply the supervised model. Our proposed SynGen framework is suggested to be applied in domains where there is no well-annotated data.

\section*{Broader Impact Statement}
This paper focuses on biomedical named entity recognition. The named entity recognition task is a standard NLP task and we do not make any new dataset while all used datasets are properly cited. This work does not cause any kind of safety or security concerns and it also does not raise any human rights concerns or environmental concerns. 

\bibliography{reference}
\bibliographystyle{acl_natbib}

\clearpage
\appendix
\onecolumn

\begin{center}
{\LARGE \textbf{Appendix. Supplementary Material}}
\end{center}

\section{Proof of \Cref{thm:main}}\label{sec:A-mainproof}

We denote the encoded representation of biomedical named entities $\vs\in\mathcal{S}$ as $\vr=E(\vs)\in \mathcal{P}$ where $|\mathcal{S}|=\mathcal{P}$ and denote the encoded named entities representation in UMLS as $\hat{\vr}\in \hat{\mathcal{P}}$ where $|\hat{\mathcal{P}}|=|\hat{\mathcal{S}}|$.

\mainthm*

\begin{proof}

$\forall \vr \in \mathcal{P}$, $\exists \hat{\vr} \in \hat{\mathcal{P}}$ such that $\|\vr-\hat{\vr}\|\le \epsilon$. We assume that $\forall \hat{\vr} \in \hat{\mathcal{P}}$ the loss $f(\hat{\vr})$ is bounded as $f(\hat{\vr})\in [0,b]$. Therefore
$$\begin{aligned}
  f(\vr) &= f(\vr)-f(\hat{\vr})+f(\hat{\vr})\\
       &\le \|f(\vr)-f(\hat{\vr})\|+f(\hat{\vr})\\
       &\le \kappa\|\vr-\hat{\vr}\|+f(\hat{\vr})\\
       &\le \kappa\epsilon+b\\
\end{aligned}$$

On the other hand, as $\hat{\mathcal{P}}$ is evenly sampled from $\mathcal{P}$, namely $\hat{\mathcal{P}}=\{\hat{\vr}|\hat{\vr}\sim \mathcal{P}\}$. $\forall f$, we have $\E f(\vr)=\E f(\hat{\vr})=\frac 1 {|\hat{\mathcal{P}}|}\sum_{i=1}^{|\hat{\mathcal{P}}|} \E f(\hat{\vr}_i)=\E\frac 1 {|\hat{\mathcal{P}}|}\sum_{i=1}^{|\hat{\mathcal{P}}|}  f(\hat{\vr}_i)=\E \hat{R}$

Therefore,
\begin{equation}
\begin{aligned}
  f(\vr)-\hat{R}&=f(\vr)-\E f(\vr) +\E f(\vr)-\hat{R}\\
  &=f(\vr)-\E f(\vr) +\E \hat{R}-\hat{R}
\end{aligned}
\label{eqn:telescopsum}
\end{equation}

For the first term $f(\vr)-\E f(\vr)$ in \Cref{eqn:telescopsum}, by Hoeffding, we have
$$\begin{aligned}
  p(f(\vr)-\E f(\vr)\ge t)\le \exp\left(-\frac{2t^2}{(\kappa\epsilon+b)^2} \right)
\end{aligned}$$

Then, take all $\vr \in \mathcal{P}$, we have
$$\begin{aligned}
  p(\exists \vr\in \mathcal{P}|f(\vr)-\E f(\vr)\ge t)&=p\left(\bigcup_{\vr\in \mathcal{P}}\{ f(\vr)-\E f(\vr)\ge t\}\right)\\
  &\le \sum_{\vr \in \mathcal{P}}p\left(\{ f(\vr)-\E f(\vr)\ge t\}\right)\\
  &\le \sum_{\vr \in \mathcal{P}} \exp\left(-\frac{2t^2}{(\kappa\epsilon+b)^2} \right)\\
  &= |\mathcal{P}|\exp\left(-\frac{2t^2}{(\kappa\epsilon+b)^2} \right)
\end{aligned}$$

Therefore,

  $$p(f(\vr)-\E f(\vr)< t)=1-|\mathcal{P}|\exp\left(-\frac{2t^2}{(\kappa\epsilon+b)^2} \right)$$

Then denote $\delta=|\mathcal{P}|\exp\left(-\frac{2t^2}{(\kappa\epsilon+b)^2} \right)$, we have with probability at least $1-\delta$,
  
\begin{equation}
  f(\vr)-\E f(\vr) \le (\kappa\epsilon+b)\sqrt{\frac{\ln |\mathcal{P}|-\ln\delta}{2}}
  \label{eqn:prooft1}
\end{equation}

Similarly, for the second term $\E \hat{R}-\hat{R}$ in \Cref{eqn:telescopsum}, by Hoeffding, we have

$$p(\E \hat{R}-\hat{R}\ge t)\le \exp\left(-\frac{2|\hat{\mathcal{P}}|^2t^2}{|\hat{\mathcal{P}}|b^2} \right)=\exp\left(-\frac{2|\hat{\mathcal{P}}|t^2}{r^2} \right)$$

Then denote $\delta=\exp\left(-\frac{2|\hat{\mathcal{P}}|t^2}{b^2} \right)$, we have with probability at least $1-\delta$,

\begin{equation}
f(\vr)-\E f(\vr) \le b\sqrt{\frac{\ln\frac{1}{\delta}}{2|\hat{\mathcal{P}}|}}
\label{eqn:prooft2}
\end{equation}

By combining \Cref{eqn:prooft1} and \Cref{eqn:prooft2}, we have

$$\begin{aligned}
&p\left(f(\vr)-\hat{R}<(\kappa\epsilon+b)\sqrt{\frac{\ln |\mathcal{P}|-\ln\delta}{2}} + b\sqrt{\frac{\ln\frac{1}{\delta}}{2|\hat{\mathcal{P}}|}}\right)\\
=&p\left(f(\vr)-\E f(\vr) +\E f(\vr)-\hat{R}<(\kappa\epsilon+b)\sqrt{\frac{\ln |\mathcal{P}|-\ln\delta}{2}} + b\sqrt{\frac{\ln\frac{1}{\delta}}{2|\hat{\mathcal{P}}|}}\right)\\
\ge&p\left((f(\vr)-\E f(\vr) <(\kappa\epsilon+b)\sqrt{\frac{\ln |\mathcal{P}|-\ln\delta}{2}}) \bigcap (\E f(\vr)-\hat{R} < b\sqrt{\frac{\ln\frac{1}{\delta}}{2|\hat{\mathcal{P}}|}})\right)\\
=&(1-\delta)^2\\
\ge & 1-2\delta
\end{aligned}$$

Therefore, $\forall f$, $\forall \vr \in \mathcal{P}$, we have with probability at least $1-2\delta$, 
$$f(\vr)-\hat{R}<(\kappa\epsilon+b)\sqrt{\frac{\ln |\mathcal{P}|-\ln\delta}{2}} + b\sqrt{\frac{\ln\frac{1}{\delta}}{2|\hat{\mathcal{P}}|}}$$

Finally, by taking the supremum of the lefthand side terms, we have with probability at least $1-\delta$,
$$\sup_{\vr\in \mathcal{P}}(f(\vr)-\hat{R})<(\kappa\epsilon+b)\sqrt{\frac{\ln |\mathcal{P}|+\ln\frac{2}{\delta}}{2}} + b\sqrt{\frac{\ln\frac{2}{\delta}}{2|\hat{\mathcal{P}}|}}$$

\end{proof}

\section{Illustration of SDR and NPR}\label{sec:A-illu}
\Cref{fig:intuition} shows the intuition of how SDR and NPR improve the performance. On the left-hand side, it shows the synonyms and the corresponding loss function value without SDR and NPR. On the right-hand side, it shows the synonym points and the regularized function value. The results indicate that when applied the SDR, the distance between the synonyms for the same concept concentrates more closely with each other. On the other hand, with the NPR, the Lipschitz constant for the function decrease, and the landscape for the function becomes quite flat. As a result, the function value for the synonyms is more close to each other.

\begin{figure*}[h]
  \centering
  \includegraphics[width=0.8\columnwidth]{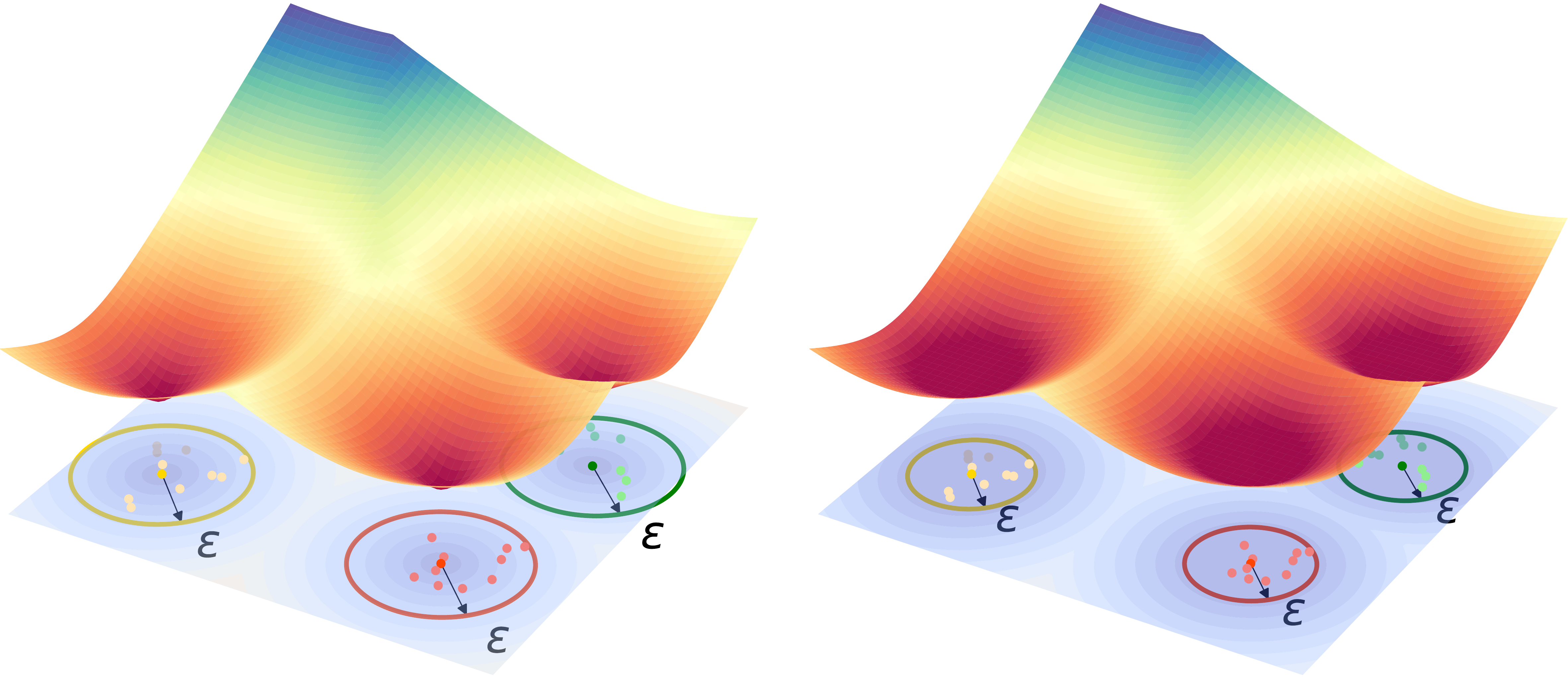}
  \caption{Intuition of SDR and NPR.}
  \label{fig:intuition} 
\end{figure*}

%
%

\section{Full Results}\label{sec:A-fullresult}
\Cref{tab:mainfull} shows the full results of the experimental results.

\begin{table*}[t]
  \centering
  \footnotesize
  \resizebox{1.\textwidth}{!}{
  \begin{tabular}{l@{~}l@{~}@{~}c@{~}@{~}c@{~}@{~}c@{~}@{~}c@{~}@{~}c@{~}@{~}c@{~}@{~}c@{~}@{~}c@{~}@{~}c@{~}@{~}c@{~}@{~}c@{~}@{~}c@{~}@{~}c@{~}@{~}c@{~}@{~}c@{~}@{~}c@{~}@{~}c@{~}@{~}c@{~}@{~}c@{~}@{~}c@{~}@{~}c@{~}}
  \toprule
  {} & \multirow{2}{*}{\textbf{Model}} &                         \multicolumn{3}{c}{NCBI} & \multicolumn{3}{c}{BC5CDR-D} & \multicolumn{3}{c}{BC5CDR-C} & \multicolumn{3}{c}{BC4CHEMD} & \multicolumn{3}{c}{Species-800}  & \multicolumn{3}{c}{LINNAEUS} & \multicolumn{3}{c}{AVG}  \\
  \cmidrule(lr){3-23}
  {} & {} &  P & R & F$_1$ & P & R & F$_1$ & P & R & F$_1$ & P & R & F$_1$ & P & R & F$_1$ & P & R & F$_1$ & P & R & F$_1$\\  
\midrule
\multirow{8}{*}{\rotatebox[origin=c]{90}{\mbox{\fontsize{8.}{8}\selectfont {(Distantly) Supervised}}}\ } & BioBert$^\natural$            &                      88.2 &           91.2 &           89.7 &           86.5 &           87.8 &           87.2 &           93.7 &           93.3 &           93.5 &           92.8 &           91.9 &           92.4 &           72.8 &           75.4 &           74.1 &           90.8 &           85.8 &           88.2 &           87.5 &           87.6 &           87.5 \\
&SBM$^\natural$                &                      88.4 &           88.9 &           88.6 &           83.4 &           86.4 &           84.9 &           93.2 &           93.6 &           93.4 &           92.0 &           86.6 &           89.2 &           99.5 &           91.6 &           95.4 &           99.8 &           80.1 &           88.9 &           92.8 &           87.9 &           90.1 \\
&SBMCross$^\diamondsuit$           &                     75.9 &           58.3 &           66.0 &           70.1 &           61.3 &           65.4 &           94.1 &           86.4 &           90.1 &           72.2 &           63.2 &           67.4 &           64.2 &           64.5 &           64.3 &           78.8 &           45.8 &           57.9 &           75.9 &           63.2 &           68.5 \\
&SWELLSHARK$^{\flat\triangle}$         &               64.7 &           69.7 &           67.1 &           80.7 &           77.6 &           79.1 &           88.3 &           88.3 &           88.3 &              - &              - &              - &              - &              - &              - &              - &              - &              - &           77.9 &           78.5 &           78.2 \\
&AutoNER$^{\heartsuit\sharp}$            &               79.4 &           72.0 &           75.5 &           86.2 &           67.9 &           76.0 &           85.2 &           84.2 &           84.7 &           91.1 &           18.9 &           31.3 &           86.6 &           90.9 &           88.7 &           92.1 &           95.6 &           93.8 &           86.8 &           71.6 &           75.0 \\
&AutoNER w/o DT$^{\heartsuit}$     &                    66.8 &           32.4 &           43.6 &           72.0 &           17.3 &           27.9 &           89.7 &           67.3 &           76.9 &           90.7 &           19.7 &           32.4 &           57.6 &           50.7 &           53.9 &           88.4 &           39.0 &           54.1 &           77.5 &           37.7 &           48.1 \\
&AutoNER w/o IDC$^{\sharp}$    &                    85.1 &           19.1 &           31.2 &           87.1 &           40.4 &           55.2 &           94.2 &           37.3 &           53.4 &           91.2 &           18.8 &           31.2 &           83.6 &           18.5 &           30.3 &           90.4 &           62.8 &           74.1 &           88.6 &           32.8 &           45.9 \\
\hline
\multirow{8}{*}{\rotatebox[origin=c]{90}{\mbox{\fontsize{8.}{8}\selectfont {Dictionary-Based}}}}&AutoNER w/o DT+IDC             &           57.9 &            9.7 &           16.6 &           63.0 &           13.9 &           22.8 &           92.8 &           39.3 &           55.2 &           60.9 &           24.6 &           35.1 &           59.8 &           25.0 &           35.3 &           80.1 &           33.0 &           46.8 &           69.1 &           24.2 &           35.3 \\
& EmbSim (BioBert)                                                   &           56.7 &           24.9 &           34.6 &           61.8 &           14.3 &           23.2 &           71.7 &           61.2 &           66.0 &           47.4 &           24.7 &           32.4 &           49.0 &           34.2 &           40.3 &           80.4 &           42.9 &           55.9 &           61.2 &           33.7 &           42.1 \\
& EmbSim (PubMedBert)                                                &           55.6 &           24.6 &           34.2 &           61.0 &           13.1 &           21.6 &           70.9 &           61.7 &           66.0 &           46.2 &           24.8 &           32.3 &           50.8 &           29.7 &           37.4 &           80.3 &           42.2 &           55.4 &           60.8 &           32.7 &           41.2 \\
& EmbSim (SAPBert)                                                   &           58.3 &           27.0 &           37.0 &           61.4 &           14.5 &           23.4 &           71.9 &           61.1 &           66.1 &           46.2 &           24.4 &           31.9 &           50.5 &           28.5 &           36.5 &           80.3 &           42.0 &           55.1 &           61.4 &           32.9 &           41.7 \\
& EmbSim (Word2vec)                                                  &           46.2 &           27.6 &           34.5 &           50.2 &           15.3 &           23.5 &           64.9 &           64.8 &           64.9 &           37.9 &           30.4 &           33.7 &           40.6 &           27.2 &           32.6 &           69.4 &           41.7 &           52.1 &           51.5 &           34.5 &           40.2 \\
&MetaMap                                                           &           61.8 &           27.8 &           38.4 &           69.3 &           13.3 &           22.3 &           65.9 &           63.5 &           64.7 &           33.1 &           25.2 &           28.6 &           56.9 &           48.7 &           52.5 &           85.5 &           44.3 &           58.3 &           62.1 &           37.1 &           44.1 \\
& MetaMap (Uncased)                                                  &           58.4 &           27.5 &           37.4 &           63.5 &           18.4 &           28.6 &  \textbf{94.8} &           64.1 &           76.5 &  \textbf{86.2} &           24.0 &           37.5 &           49.1 &           52.3 &           50.6 &           79.1 &           49.6 &           61.0 &           71.9 &           39.3 &           48.6 \\
& SPED                                                              &           59.3 &           30.1 &           39.9 &           68.2 &           14.3 &           23.7 &           65.6 &           63.9 &           64.8 &           33.0 &           25.4 &           28.7 &           56.0 &           49.4 &           52.5 &           85.3 &           44.7 &           58.7 &           61.2 &           38.0 &           44.7 \\
& TF-IDF                                                            &           26.1 &           29.7 &           27.7 &           32.0 &           22.6 &           26.4 &           74.1 &           65.4 &           69.5 &           19.1 &           39.3 &           25.7 &           42.5 &           21.4 &           28.4 &           77.3 &           40.5 &           53.1 &           45.2 &           36.5 &           38.5 \\
& QuickUMLS (Dice)                                                   &           61.2 &           20.9 &           31.2 &           76.1 &           25.2 &           37.8 &           82.0 &           57.8 &           67.8 &           60.1 &           20.6 &           30.7 &           51.9 &           46.7 &           49.1 &           76.6 &           50.2 &           60.6 &           68.0 &           36.9 &           46.2 \\
& QuickUMLS (Cosine)                                                 &           52.6 &           22.6 &           31.6 &           62.7 &           32.2 &           42.6 &           74.2 &           58.7 &           65.6 &           47.0 &           22.9 &           30.8 &           44.0 &           46.6 &           45.3 &           67.1 &           50.2 &           57.5 &           57.9 &           38.9 &           45.6 \\
&QuickUMLS (Jaccard)                                                &  \textbf{80.4} &           17.2 &           28.4 &  \textbf{93.5} &           14.5 &           25.1 &           93.2 &           56.9 &           70.7 &           82.7 &           16.9 &           28.1 &  \textbf{61.7} &           46.7 &           53.2 &  \textbf{88.2} &           44.7 &           59.3 &  \textbf{83.3} &           32.8 &           44.1 \\
&QuickUMLS (Overlap)                                                &            3.2 &           13.2 &            5.1 &            7.1 &           21.6 &           10.7 &            7.5 &           12.6 &            9.4 &            3.4 &           18.0 &            5.7 &            3.8 &           10.2 &            5.6 &           13.1 &            5.8 &            8.0 &            6.4 &           13.6 &            7.4 \\
&SynGen                                                             &           68.8 &  \textbf{64.1} &  \textbf{66.2} &           63.8 &  \textbf{63.4} &  \textbf{63.5} &           85.0 &  \textbf{83.9} &  \textbf{84.4} &           56.4 &  \textbf{51.1} &  \textbf{53.6} &           58.8 &  \textbf{65.7} &  \textbf{62.0} &           84.9 &  \textbf{66.2} &  \textbf{74.4} &           69.6 &  \textbf{65.7} &  \textbf{67.4} \\
\hline
& SynGen (SAPBert)                                                      &           70.0 &           63.2 &           66.3 &           65.4 &           59.3 &           62.1 &           84.4 &           83.2 &           83.8 &           51.0 &           40.7 &           42.4 &           56.7 &           59.5 &           58.1 &           83.0 &           63.0 &           71.6 &           68.4 &           61.5 &           64.0 \\
& SynGen (BioBert)                                                      &           59.8 &           65.8 &           62.4 &           58.8 &           56.5 &           57.4 &           83.2 &           79.1 &           81.0 &           45.0 &           30.3 &           31.7 &           55.8 &           49.4 &           52.4 &           82.3 &           46.3 &           59.3 &           64.2 &           54.6 &           57.4 \\
& SynGen (Bert)                                                        &           60.1 &           64.2 &           62.0 &           56.5 &           55.6 &           56.1 &           79.4 &           80.0 &           79.6 &           46.4 &           48.2 &           47.1 &           55.1 &           55.4 &           55.2 &           81.3 &           50.2 &           62.1 &           63.1 &           58.9 &           60.4 \\
& SynGen (SciBert)                                                     &           57.9 &           66.6 &           61.6 &           56.5 &           55.6 &           55.7 &           83.5 &           78.9 &           81.0 &           55.3 &           45.6 &           49.3 &           54.5 &           51.8 &           53.1 &           80.4 &           52.8 &           63.7 &           64.7 &           58.6 &           60.7 \\
  \bottomrule
  \end{tabular}
  }

\caption{Main results. We repeat each experiment for 10 runs and report the averaged scores. For BioBert and SWELLSHARK, we report the score from the original paper. We mark the extra effort involved with superscripts, where $^\natural$ is gold annotations; $^\diamondsuit$ is in-domain annotations; $^\flat$ is regex design; $^\triangle$ is special case tuning; $^\heartsuit$ is in-domain corpus; $^\sharp$ is dictionary tailor. The standard deviation analysis is in \Cref{fig:varbox}.} 
\label{tab:mainfull}
\end{table*}

\section{Hyper-Parameter Tuning}\label{sec:A-searchspace}
The hyper-parameter with the corresponding search space are listed in \Cref{tab:searchspace}.

\begin{table}[h]
  \centering
  \scriptsize
  
  \begin{tabular}{@{~}l@{~}@{~}c@{~}@{~}c@{~}@{~}c@{~}@{~}c@{~}@{~}c@{~}@{~}c@{~}@{~}c@{~}}
  \toprule
  {} & Search Space&        NCBI &    BC5CDR-D &    BC5CDR-C &    BC4CHEMD &        Species-800 &    LINNAEUS      \\
  \midrule
  SDR weight $\alpha$& \{0.05, 0.1, 0.2, 0.5, 1.0, 2.0\}  & 0.1 & 0.1 & 0.1 & 0.1 & 0.1 & 0.2  \\
  NPR weight $\beta$& \{0.05, 0.1, 0.2, 0.5, 1.0, 2.0\}  & 0.1 & 1.0 & 1.0 & 1.0 & 0.5 & 0.1  \\
  NSF minimal distance $t_d$ & \{4.0,5.0,6.0,7.0,8.0,9.0,10.0\} & 7.0& 7.0& 5.0 & 6.0 &7.0 &6.0\\
  Inference probability threshold $t_p$ & \{0.3,0.4,0.5,0.6,0.7\} & 0.5& 0.6& 0.5 & 0.5 &0.6 &0.5\\
  Inference maximal span length $m_s$ & \{6,8,10,12\} & 8& 8& 10 & 10 &8 &8\\
  \bottomrule
  \end{tabular}
  
\caption{Search space for hyper-parameters.}
\label{tab:searchspace}
\end{table}

\section{Advantages and Disadvantages for NER methods}\label{sec:A-adv}
In \Cref{tab:compare}, we compare the advantages and disadvantages of different NER paradigms.

\begin{table}[h]
    \small
	\centering  
	\renewcommand{\arraystretch}{1.2}
	\setlength{\tabcolsep}{6pt}
	\scalebox{0.82}{
	\begin{tabular}{lll}
	    \hlinewd{0.75pt}
	    \textbf{Paradigm}&\textbf{Advantage}&\textbf{Disadvantage}\\
	    \hline
	    Supervised&Best performance.&\makecell[l]{Demanding large-scale\\ annotated data.}\\
	    \makecell[l]{Distantly\\Supervised}&No human annotation.&\makecell[l]{Requiring extra effort,\\e.g. regex and rule design.}\\
	    Dictionary-based&\makecell[l]{Only one dictionary is\\ needed.}&Poor performance.\\
		\hlinewd{0.75pt}
	\end{tabular}}
    \caption{Comparison between different NER paradigms.}
	\label{tab:compare}
\end{table}

\end{document}